\newcommand{\anAgent}{\ensuremath{i}}
\newcommand{\anotherAgent}{\ensuremath{j}}
\newcommand{\ofAgent}[1]{\ensuremath{^{(#1)}}}
\newglossaryentry{matrix:Adjacency}{
	name=\ensuremath{\bm{D}},
	description={Adjacency matrix},
	sort={D},
    type=symbol
}
\newcommand{\matAdjacency}{\gls{matrix:Adjacency}}
\newcommand{\matAdjacencyElement}[1]{\glslink{matrix:Adjacency}{\matAdjacency_{#1}}}
\newglossaryentry{set:realNumbers}{
	name=\ensuremath{\mathbb{R}},
	description={Set of real numbers},
	sort={real numbers},
    type=symbol
}
\newcommand{\setRealNumbers}{\gls{set:realNumbers}}
\newglossaryentry{set:naturalNumbers}{
	name=\ensuremath{\mathbb{N}},
	description={Set of natural numbers},
	sort={natural numbers},
    type=symbol
}
\newglossaryentry{set:systemStates}{
	name=\ensuremath{\mathcal{S}},
	description={Set of system states},
	sort={set of system states},
    type=symbol
}
\newglossaryentry{set:bigO}{
	name=\ensuremath{O},
	description={Big O},
	sort={O},
    type=symbol
}
\newglossaryentry{scalar:Weight}{
	name=\ensuremath{w},
	description={Weight},
	sort={weight},
    type=symbol
}
\newglossaryentry{scalar:NumberOfAgents}{
    name=\ensuremath{N},
    description={Number of agents},
    sort={Number of agents},
    type=symbol
}
\newcommand{\numAgents}{\gls{scalar:NumberOfAgents}}
\newglossaryentry{graph:path}{
    name=\ensuremath{\pi},
    description={Path},
    sort={Path},
    type=symbol
}
\newcommand{\graphPath}{\gls{graph:path}}
\newglossaryentry{scalar:NumberOfVerticesInPath}{
    name=\ensuremath{N_{\graphPath}},
    description={Number of vertices in path $\graphPath$, or length},
    sort={Number of vertices in path},
    type=symbol
}
\newcommand{\numVerticesPath}{\gls{scalar:NumberOfVerticesInPath}}
\newglossaryentry{trajectory:Reference}{
    name=\ensuremath{\bm{r}},
    description={Reference trajectory},
    sort={Reference Trajectory},
    type=symbol
}
\newglossaryentry{sym:horizonControl}{
	name=\ensuremath{N_u},
	description={Control horizon in model predictive control},
	sort={Nu},
    type=symbol
}
\newglossaryentry{sym:horizonPrediction}{
	name=\ensuremath{N_p},
	description={Prediction horizon in model predictive control},
	sort={Np},
    type=symbol
}
\newglossaryentry{sym:vehicleOrientation}{
	name=\ensuremath{\psi},
	description={Vehicle orientation},
	sort={psi},
    type=symbol
}
\newglossaryentry{sym:sysModelContinuous}{
    name=\ensuremath{f},
    description={Continuous-time system model},
    sort={f continuous-time},
    type=symbol
}
\newcommand{\sysModelContinuous}{\gls{sym:sysModelContinuous}}
\newglossaryentry{sym:sysModelDiscrete}{
    name=\ensuremath{f_{d}},
    description={Discrete-time system model},
    sort={f discrete-time},
    type=symbol
}
\newglossaryentry{sym:sysControlInputs}{
	name=\ensuremath{\bm{u}},
	description={System control inputs},
	sort=u,
    type=symbol
}
\NewDocumentCommand{\sysControlInputs}{ o }{\glslink{sym:sysControlInputs}{%
    \IfNoValueTF{#1}%
        {\ensuremath{\bm{u}}}%
        {\ensuremath{\bm{u}^{(#1)}}}%
}}
\newglossaryentry{sym:outputs}{
	name=\ensuremath{\bm{y}},
	description={System outputs},
	sort={y},
    type=symbol
}
\newglossaryentry{sym:sysSpeed}{
	name=\ensuremath{\mathrm{v}},
	description={Vehicle speed},
	sort={v},
    type=symbol
}
\newcommand{\sysSpeed}{\gls{sym:sysSpeed}}
\newglossaryentry{sym:inSpeed}{
	name=\ensuremath{u_{\sysSpeed}},
	description={Vehicle input speed},
	sort={uv},
    type=symbol
}
\newglossaryentry{sym:steering-angle}{
	name=\ensuremath{\delta},
	description={Vehicle steering angle},
	sort={delta},
    type=symbol
}
\newglossaryentry{sym:inSteering}{
	name=\ensuremath{u_{\delta}},
	description={Vehicle input steering angle},
	sort={ud},
    type=symbol
}
\newglossaryentry{sym:nColors}{
	name=\ensuremath{N_c},
	description={Number of colors},
	sort={Number of colors},
    type=symbol
}
\newglossaryentry{sym:nStates}{
	name=\ensuremath{n},
	description={Number of states of a dynamical system},
	sort={Number of states},
    type=symbol
}
\newcommand{\numStates}{\gls{sym:nStates}}
\newglossaryentry{sym:nInputs}{
    name=\ensuremath{m},
    description={Number of inputs of a dynamical system},
    sort={m number of inputs},
    type=symbol
}
\newcommand{\numInputs}{\gls{sym:nInputs}}
\newglossaryentry{sym:nLevels}{
	name=\ensuremath{N_{\text{CL}}},
	description={Number of computation levels},
	sort={Number of computation levels},
    type=symbol
}
\newglossaryentry{sym:nLevelsAllowed}{
	name=\ensuremath{N_{\text{CL},a}},
	description={Allowed number of computation levels},
	sort={Number of computation levels allowed},
    type=symbol
}
\newglossaryentry{sym:numGroups}{
	name=\ensuremath{N_{g}},
	description={Number of parallelly computing groups of agents},
	sort={Number of groups},
    type=symbol
}
\newglossaryentry{sym:fnPrio}{
    name=\ensuremath{p},
    description={Priority assignment function},
    sort={Priority assignment function},
    type=symbol
}
\newglossaryentry{sym:tSample}{
	name=\ensuremath{T_s},
	description={Sample Time},
	sort={T sample},
    type=symbol
}
\newglossaryentry{sym:tSolve}{
	name=\ensuremath{T_\text{sol.}},
	description={Computation time \tSolveB{\anAgent} that agent $\anAgent$ needs to solve its \ac{ocp}},
	sort={T solve},
    type=symbol
}
\newcommand{\tSolveB}[1]{\glslink{sym:tSolve}{\ensuremath{\ensuremath{T_\text{sol.}}^{(#1)}}}}
\newglossaryentry{sym:tSolveUpper}{
	name=\ensuremath{T_\text{sol.,max}},
	description={Upper computation time $T_\text{sol.,max}\ofAgent{\anAgent}$ that agent $\anAgent$ needs to solve it \ac{ocp}},
	sort={T solve upper},
    type=symbol
}
\newglossaryentry{sym:vertices}{
	name=\ensuremath{\mathcal{V}},
	description={Set of vertices},
	sort={Vertices},
    type=symbol
}
\newcommand{\setVertices}{\gls{sym:vertices}}
\newcommand{\setAgents}{\setVertices}
\newcommand{\helpSetPredecessors}[1]{\ensuremath{\setVertices^{(#1\leftarrow)}}}
\newglossaryentry{sym:predecessors}{
	name=\ensuremath{\helpSetPredecessors{i}},
	description={Set of predecessors of vertex $i$},
	sort={Vertices 1},
    type=symbol
}
\newcommand{\setPredecessors}[1]{\glslink{sym:predecessors}{\ensuremath{\helpSetPredecessors{#1}}}}
\newcommand{\helpSetPredecessorsPar}[1]{\ensuremath{\setVertices^{(#1\leftarrow)}_{\text{par.}}}}
\newglossaryentry{sym:predecessorsPar}{
	name=\ensuremath{\helpSetPredecessorsPar{i}},
	description={Set of predecessors of vertex $i$ that have parallel couplings with it},
	sort={Vertices 2},
    type=symbol
}
\newcommand{\helpSetPredecessorsSeq}[1]{\ensuremath{\setVertices^{(#1\leftarrow)}_{\text{seq.}}}}
\newglossaryentry{sym:predecessorsSeq}{
	name=\ensuremath{\helpSetPredecessorsSeq{i}},
	description={Set of predecessors of vertex $i$ that have sequential couplings with it},
	sort={Vertices 3},
    type=symbol
}
\newcommand{\helpSetSuccessors}[1]{\ensuremath{\setVertices^{(#1\rightarrow)}}}
\newglossaryentry{sym:successors}{
	name=\ensuremath{\helpSetSuccessors{i}},
	description={Set of successors of vertex $i$},
	sort={Vertices 4},
    type=symbol
}
\newcommand{\setSuccessors}[1]{\glslink{sym:successors}{\ensuremath{\helpSetSuccessors{#1}}}}
\newglossaryentry{sym:neighbors}{
	name=\ensuremath{\setVertices^{(i)}},
	description={Set of neighbors of vertex $i$},
	sort={Vertices 0},
    type=symbol
}
\newcommand{\setNeighbors}[1]{\glslink{sym:neighbors}{\ensuremath{\setVertices^{(#1)}}}}
\newglossaryentry{sym:degree}{
	name=\ensuremath{d^{(i)}},
	description={Degree of vertex $i$. Sum of in-degree and out-degree},
	sort=degree,
    type=symbol
}
\newcommand{\vertexDegree}[1]{\glslink{sym:degree}{\ensuremath{d^{(#1)}}}}
\newcommand{\helpVertexInDegree}[1]{\ensuremath{d^{(#1\leftarrow)}}}
\newglossaryentry{sym:inDegree}{
    name=\helpVertexInDegree{i},
    description={In-degree of vertex $i$},
    sort={degree in},
    type=symbol
}
\newcommand{\vertexInDegree}[1]{\glslink{sym:inDegree}{\helpVertexInDegree{#1}}}
\newcommand{\helpVertexOutDegree}[1]{\ensuremath{d^{(#1\rightarrow)}}}
\newglossaryentry{sym:outDegree}{
    name=\helpVertexOutDegree{i},
    description={Out-degree of vertex $i$},
    sort={degree out},
    type=symbol,
}
\newcommand{\vertexOutDegree}[1]{\glslink{sym:outDegree}{\helpVertexOutDegree{#1}}}
\newglossaryentry{sym:matLevels}{
	name=\ensuremath{\bm{L}},
	description={Matrix of computation levels},
	sort=L,
    type=symbol
}
\newglossaryentry{sym:tComp}{
	name=\ensuremath{T},
	description={Computation time},
	sort={T},
    type=symbol
}
\newglossaryentry{sym:tCompNcs}{
	name=\ensuremath{T_{\text{NCS}}},
	description={Computation time of \iac{ncs}},
	sort={T NCS},
    type=symbol
}
\newcommand{\tCompNcs}{\gls{sym:tCompNcs}}
\newglossaryentry{graph:Undirected}{
	name=\ensuremath{\mathcal{G}},
	description={Undirected Graph},
	sort={graph1},
    type=symbol
}
\newcommand{\graphUndirected}{\gls{graph:Undirected}}
\newglossaryentry{graph:Directed}{
    name=\ensuremath{\vec{\gls*{graph:Undirected}}},
	description={Directed Graph},
	sort={graph2},
    type=symbol
}
\newcommand{\graphDirected}{\gls{graph:Directed}}
\newglossaryentry{mat:edgeUtilities}{
    name=\ensuremath{M_\text{u}},
	description={Edge utility matrix},
	sort={matrix edge utilities},
    type=symbol
}
\newglossaryentry{sym:setColors}{
    name=\ensuremath{\mathcal{C}},
    description={Set of colors},
    sort=Colors,
    type=symbol
}
\newglossaryentry{sym:varControlInvariantSet}{
	name=\ensuremath{\mathcal{C}_{\text{inv}}},
	description={Control invariant set},
	sort={Control invariant set},
    type=symbol
}
\newglossaryentry{set:Weights}{
	name=\ensuremath{\mathcal{W}},
	description={Set of weights in a weighted graph},
    sort={Weights},
    type=symbol
}
\newglossaryentry{set:Edges}{
	name=\ensuremath{\mathcal{E}},
	description={Set of edges; used to indicate that only undirected edges exist},
    sort={Edges},
    type=symbol
}
\newcommand{\setEdges}{\gls{set:Edges}}
\newglossaryentry{sym:setEdgesDirected}{
	name=\ensuremath{\vec{\gls*{set:Edges}}},
	description={Set of directed edges},
	sort={Edges directed},
    type=symbol
}
\newglossaryentry{sym:varEdge}{
	name=\ensuremath{(i \rightarrow j)},
	description={Directed edge from vertex $i$ to vertex $j$},
	sort={edge},
    type=symbol
}
\newcommand{\edgeDirected}[2]{\glslink{sym:varEdge}{\ensuremath{(#1 \rightarrow #2)}}}
\newglossaryentry{sym:fnReorder}{
	name=\ensuremath{f_r},
	description={Reordering function for graph color values},
	sort=fr,
    type=symbol
}
\newglossaryentry{sym:fcnObjective}{
    name=\ensuremath{J},
    description={Objective function of an optimization problem},
    sort=J,
    type=symbol
}
\NewDocumentCommand{\fcnObjective}{ o }{\glslink{sym:fcnObjective}{%
    \IfNoValueTF{#1}%
        {\ensuremath{J}}%
        {\ensuremath{J^{(#1)}}}%
}}
\newglossaryentry{sym:fcnObjectiveState}{
    name=\ensuremath{\ell_{x}},
    description={Reference tracking objective function},
    sort={lx Reference tracking objective function},
    type=symbol
}
\NewDocumentCommand{\fcnObjectiveState}{ o }{\glslink{sym:fcnObjectiveState}{%
    \IfNoValueTF{#1}%
        {\ensuremath{\ell_{x}}}%
        {\ensuremath{\ell_{x}^{(#1)}}}%
}}
\newglossaryentry{sym:fcnObjectiveStateTerminal}{
    name=\ensuremath{\ell_{f}},
    description={Reference tracking objective terminal function},
    sort={lf Reference tracking objective terminal function},
    type=symbol
}
\newglossaryentry{sym:fcnObjectiveInput}{
    name=\ensuremath{\ell_{u}},
    description={Input change objective function},
    sort={lu Input change objective function},
    type=symbol
}
\NewDocumentCommand{\fcnObjectiveInput}{ o }{\glslink{sym:fcnObjectiveInput}{%
    \IfNoValueTF{#1}%
        {\ensuremath{\ell_{u}}}%
        {\ensuremath{\ell_{u}^{(#1)}}}%
}}
\newglossaryentry{sym:fcnObjectiveCoupling}{
    name=\ensuremath{\ell_\text{c}},
    description={Coupling objective function},
    sort={lc Coupling objective function},
    type=symbol
}
\NewDocumentCommand{\fcnObjectiveCoupling}{ oo }{\glslink{sym:fcnObjectiveCoupling}{%
    \IfNoValueTF{#1}%
        {\ensuremath{\ell_\text{c}}}%
        {\ensuremath{\ell_\text{c}^{(#1,#2)}}}%
}}
\newglossaryentry{sym:fcnConstraintCoupling}{
    name=\ensuremath{c_\text{c}},
    description={Coupling constraint function},
    sort={cc Coupling constraint function},
    type=symbol
}
\NewDocumentCommand{\fcnConstraintCoupling}{ oo }{\glslink{sym:fcnConstraintCoupling}{%
    \IfNoValueTF{#1}%
        {\ensuremath{c_\text{c}}}%
        {\ensuremath{c_\text{c}^{(#1,#2)}}}%
}}
\newglossaryentry{sym:prediction}{
	name=\ensuremath{\tilde{\bm{x}}^{(j \leftarrow i)}_{\cdot \vert k}},
	description={Prediction in agent $i$ for agent $j$ at time $k$},
	sort=x,
    type=symbol,
}
\newcommand{\agentPrediction}{\glslink{sym:prediction}{\ensuremath{ \tilde{\bm{x}} }}}
\newcommand{\agentPredictionForAgentAFromAgentBAtTimeC}[3]{\glslink{sym:prediction}{\ensuremath{ \agentPrediction^{(#1 \leftarrow #2)}_{\cdot \vert #3} }}}
\newglossaryentry{sym:state}{
	name=\ensuremath{\bm{x}},
	description={System state},
	sort=x,
    type=symbol
}
\newcommand{\sysState}{\gls{sym:state}}
\newglossaryentry{sym:ref}{
	name=\ensuremath{\bm{r}},
	description={System reference},
	sort=r,
    type=symbol
}
\newglossaryentry{sym:stateAgent}{
	name=\ensuremath{\sysState^{(i)}_{(k)}},
	description={System state of agent $i$ at time $k$},
	sort=x,
    type=symbol,
}
\newglossaryentry{sym:setReachable}{
	name=\ensuremath{\mathcal{R}^{(i)}},
	description={reachable set of agent $i$},
	sort={Reachable set},
    type=symbol
}
\newcommand{\setReachable}{\glslink{sym:setReachable}{\ensuremath{\mathcal{R}}}}
\newglossaryentry{set:occupiedArea}{
	name=\ensuremath{\mathcal{O}^{(i)}},
	description={Set of the occupied area of the predicted trajectory of agent $\anAgent$},
	sort={occupied area},
    type=symbol
}
\newglossaryentry{set:feasibleStates}{
	name=\ensuremath{\mathcal{X}},
	description={set of feasible states},
	sort={x},
    type=symbol
}
\newcommand{\setFeasibleStates}{\gls{set:feasibleStates}}
\newglossaryentry{set:feasibleInputs}{
	name=\ensuremath{\mathcal{U}},
	description={set of feasible inputs},
	sort={u},
    type=symbol
}
\newcommand{\setFeasibleInputs}{\gls{set:feasibleInputs}}
\newglossaryentry{sym:numStatesConfSpace}{
    name=\ensuremath{n_p},
    description={Number of states that are in the conflictual space},
    sort={n number of states that are in the conflictual space},
    type=symbol
}
\newglossaryentry{sym:fcnProj}{
    name=\text{proj},
    description={A function that projects a reachable set of system states in the conflictual space},
    sort={Project function},
    type=symbol
}
\newglossaryentry{rl:setOfAgents}{
    name=\ensuremath{\mathcal{N}},
    description={A set of agents},
    sort={Set of agents},
    type=symbol
}
\newglossaryentry{rl:actionSpace}{
    name=\ensuremath{\mathcal{A}},
    description={Action space},
    sort={Action space},
    type=symbol
}
\newglossaryentry{rl:stateSpace}{
    name=\ensuremath{\mathcal{S}},
    description={State space},
    sort={State space},
    type=symbol
}
\newglossaryentry{rl:observationSpace}{
    name=\ensuremath{\mathcal{O}},
    description={Observation space},
    sort={Observation space},
    type=symbol
}
\newglossaryentry{rl:policySpace}{
    name=\ensuremath{\Omega},
    description={Policy space},
    sort={Policy space},
    type=symbol
}
\newglossaryentry{rl:observationFcn}{
    name=\ensuremath{\Omega},
    description={Observation function},
    sort={Observation function},
    type=symbol
}
\newglossaryentry{rl:transitionProbFcn}{
    name=\ensuremath{\mathcal{P}},
    description={Transition probability function},
    sort={Transition probability function},
    type=symbol
}
\newglossaryentry{rl:rewardFcn}{
    name=\ensuremath{R},
    description={Reward function},
    sort={Reward function},
    type=symbol
}
\newglossaryentry{rl:discountFactor}{
    name=\ensuremath{\gamma},
    description={Discount factor},
    sort={Discount factor},
    type=symbol
}
\newglossaryentry{rl:state}{
    name=\ensuremath{s},
    description={State},
    sort={State},
    type=symbol
}
\newglossaryentry{rl:nextState}{
    name=\ensuremath{s'},
    description={Next state},
    sort={Next state},
    type=symbol
}
\newglossaryentry{rl:action}{
    name=\ensuremath{a},
    description={Action},
    sort={Action},
    type=symbol
}
\newglossaryentry{rl:jointActions}{
    name=\ensuremath{a},
    description={Joint actions},
    sort={Joint actions},
    type=symbol
}
\newglossaryentry{rl:observation}{
    name=\ensuremath{o},
    description={Observation},
    sort={Observation},
    type=symbol
}
\newglossaryentry{rl:jointObservations}{
    name=\ensuremath{\bm{o}},
    description={Joint observations},
    sort={Joint observations},
    type=symbol
}
\newglossaryentry{rl:reward}{
    name=\ensuremath{r},
    description={Reward},
    sort={Reward},
    type=symbol
}
\newglossaryentry{rl:jointRewards}{
    name=\ensuremath{\bm{r}},
    description={Joint rewards},
    sort={Joint rewards},
    type=symbol
}
\newglossaryentry{rl:valueFunction}{
    name=\ensuremath{V},
    description={A function that evaluates how good a policy is},
    sort={Value function},
    type=symbol
}
\newglossaryentry{rl:policy}{
    name=\ensuremath{\pi},
    description={Policy},
    sort={Policy},
    type=symbol
}
\newglossaryentry{rl:policyOptimal}{
    name=\ensuremath{\pi^{*}{}},
    description={Optimal policy},
    sort={Optimal policy},
    type=symbol
}
\newcommand{\baseSubscript}{\textbf{D}} 
\newcommand{\priSubscript}{\textbf{P}} 
\newglossaryentry{rl:setOfPolicies}{
    name=\ensuremath{\Pi},
    description={Set of policies},
    sort={Set of policies},
    type=symbol
}
\newcommand{\setOfPolicies}{\gls{rl:setOfPolicies}}
\newglossaryentry{rl:setOfPoliciesBase}{
    name=\ensuremath{\setOfPolicies_{\baseSubscript}},
    description={Set of policies of the decision-making policy},
    sort={Set of policies of the decision-making policy},
    type=symbol
}
\newglossaryentry{rl:setOfPoliciesPri}{
    name=\ensuremath{\setOfPolicies_{\priSubscript}},
    description={Set of policies of the priority-assignment policy},
    sort={Set of policies of the priority-assignment policy},
    type=symbol
}
\newglossaryentry{rl:setOfActions}{
    name=\ensuremath{A},
    description={Set of actions},
    sort={Set of actions},
    type=symbol
}
\newcommand{\setOfActions}{\gls{rl:setOfActions}}
\newglossaryentry{rl:setOfActionsBase}{
    name=\ensuremath{\setOfActions_{\baseSubscript}},
    description={Set of actions of the decision-making policy},
    sort={Set of actions of the decision-making policy},
    type=symbol
}
\newglossaryentry{rl:setOfActionsPri}{
    name=\ensuremath{\setOfActions_{\priSubscript}},
    description={Set of actions of the priority-assignment policy},
    sort={Set of actions of the priority-assignment policy},
    type=symbol
}
\newglossaryentry{rl:setOfObservations}{
    name=\ensuremath{O},
    description={Set of observations},
    sort={Set of observations},
    type=symbol
}
\newcommand{\setOfObservations}{\gls{rl:setOfObservations}}
\newglossaryentry{rl:setOfObservationsBase}{
    name=\ensuremath{\setOfObservations_{\baseSubscript}},
    description={Set of observations of the decision-making policy},
    sort={Set of observations of the decision-making policy},
    type=symbol
}
\newglossaryentry{rl:setOfObservationsPri}{
    name=\ensuremath{\setOfObservations_{\priSubscript}},
    description={Set of observations of the priority-assignment policy},
    sort={Set of observations of the priority-assignment policy},
    type=symbol
}
\newglossaryentry{sym:distance}{
    name=\ensuremath{d},
    description={Distance},
    sort={Distance},
    type=symbol
}
\newglossaryentry{sym:numOfPointsRef}{
    name=\ensuremath{n_\text{p,RP}},
    description={Number of points on the reference path},
    sort={Number of points on the reference path},
    type=symbol
}
\newglossaryentry{sym:numOfObservedSurroundingAgents}{
    name=\ensuremath{n_\text{sur.}},
    description={Number of observed surrounding agents},
    sort={Number of observed surrounding agents},
    type=symbol
}
\newglossaryentry{sym:numOfNotMaskedSurroundingAgents}{
    name=\ensuremath{n_\text{sur.,NM}},
    description={Number of not masked surrounding agents},
    sort={Number of not masked agents},
    type=symbol
}
\newglossaryentry{sym:numOfMaskedSurroundingAgents}{
    name=\ensuremath{n_\text{sur.,M}},
    description={Number of masked surrounding agents},
    sort={Number of masked agents},
    type=symbol
}
\newglossaryentry{rl:numOfSamples}{
    name=\ensuremath{n_\text{samples}},
    description={Number of training samples},
    sort={Number of training samples},
    type=symbol
}
\newglossaryentry{rl:setOfModels}{
    name=\ensuremath{\mathcal{M}},
    description={A set of models},
    sort={A set of models},
    type=symbol
}
\newglossaryentry{rl:maxModelIndex}{
    name=\ensuremath{5},
    description={Maximum Model Index},
    sort={Maximum Model Index},
    type=symbol
}
\newglossaryentry{rl:numberOfModels}{
    name=\text{six},
    description={Number of models},
    sort={Number of models},
    type=symbol
}
\newglossaryentry{rl:model}{
    name=\ensuremath{M},
    description={RL Model},
    sort={RL Model},
    type=symbol
}
\newglossaryentry{rl:compositeScore}{
    name=\ensuremath{CS},
    description={Composite Score},
    sort={Composite Score},
    type=symbol
}
\newglossaryentry{rl:collisionRate}{
    name=\ensuremath{CR},
    description={Collision Rate},
    sort={Collision Rate},
    type=symbol
}
\newglossaryentry{rl:collisionRateTotal}{
    name=\ensuremath{CR_{\text{total}}},
    description={Total Collision Rate},
    sort={Total Collision Rate},
    type=symbol
}
\newglossaryentry{rl:collisionRateAA}{
    name=\ensuremath{CR_{\text{A-A}}},
    description={Agent-Agent Collision Rate},
    sort={Agent-Agent Collision Rate},
    type=symbol
}
\newglossaryentry{rl:collisionRateAL}{
    name=\ensuremath{CR_{\text{A-L}}},
    description={Agent-Lanelet Collision Rate},
    sort={Agent-Lanelet Collision Rate},
    type=symbol
}
\newglossaryentry{rl:safetyRate}{
    name=\ensuremath{SR},
    description={Safety Rate},
    sort={Safety Rate},
    type=symbol
}
\newglossaryentry{rl:safetyRateTotal}{
    name=\ensuremath{SR_{\text{total}}},
    description={Total Safety Rate},
    sort={Total Safety Rate},
    type=symbol
}
\newglossaryentry{rl:safetyRateAA}{
    name=\ensuremath{SR_{\text{A-A}}},
    description={Agent-Agent Safety Rate},
    sort={Agent-Agent Safety Rate},
    type=symbol
}
\newglossaryentry{rl:safetyRateAL}{
    name=\ensuremath{SR_{\text{A-L}}},
    description={Agent-Lanelet Safety Rate},
    sort={Agent-Lanelet Safety Rate},
    type=symbol
}
\newglossaryentry{rl:centerLineDeviation}{
    name=\ensuremath{CD},
    description={Center Line Deviation},
    sort={Score: Center Line Deviation},
    type=symbol
}
\newglossaryentry{rl:laneAdherence}{
    name=\ensuremath{LA},
    description={Lane Adherence},
    sort={Score: Lane Adherence},
    type=symbol
}
\newglossaryentry{rl:averageSpeed}{
    name=\ensuremath{AS},
    description={Average Speed},
    sort={Average Speed},
    type=symbol
}
\newglossaryentry{rl:baseProblem}{
    name=\ensuremath{\mathcal{G}_{\baseSubscript}},
    description={Base problem},
    sort={Base problem},
    type=symbol
}
\newglossaryentry{rl:priProblem}{
    name=\ensuremath{\mathcal{G}_{\priSubscript}},
    description={Priority assignment problem},
    sort={Priority assignment problem},
    type=symbol
}
\newglossaryentry{rl:priRank}{
    name=\ensuremath{\mathcal{R}_{\priSubscript}},
    description={Priority rank},
    sort={Priority rank},
    type=symbol
}
\newglossaryentry{rl:higherPriorities}{
    name=\ensuremath{\leftarrow},
    description={Higher priorities},
    sort={Higher priorities},
    type=symbol
}
\DeclareAcronym{ap}{
    short = AP,
    long  = Action Propagation,
}
\DeclareAcronym{c2c}{
    short = C2C,
    long  = Center-to-Center,
}
\DeclareAcronym{cav}{
    short = CAV,
    long  = Connected and Automated Vehicle,
}
\DeclareAcronym{cbf}{
    short = CBF,
    long  = Control Barrier Function,
}
\DeclareAcronym{cg}{
    short = CG,
    long = Center of Gravity,
    short-plural = s,
    long-plural-form = Centers of Gravity,
}
\DeclareAcronym{cnn}{
    short = CNN,
    long  = Convolutional Neural Network
}
\DeclareAcronym{cpm}{
    short = CPM,
    long  = Cyber-Physical Mobility
}
\DeclareAcronym{cpmlab}{
    short = CPM Lab,
    long  = Cyber-Physical Mobility Lab
}
\DeclareAcronym{dmpc}{
    short = DMPC,
    long  = distributed model predictive control
}
\DeclareAcronym{dql}{
    short = DQL,
    long  = Deep Q-Learning
}
\DeclareAcronym{hocbf}{
    short = HOCBF,
    long  = High-Order \ac{cbf},
    short-indefinite = an,
}
\DeclareAcronym{il}{
    short = IL,
    long  = Imitation Learning,
    short-indefinite = an,
}
\DeclareAcronym{mappo}{
    short = MAPPO,
    long  = Multi-Agent \ac{ppo},
    short-indefinite = an,
}
\DeclareAcronym{maddpg}{
    short = MADDPG,
    long  = Multi-Agent Deep Deterministic Policy Gradient,
    short-indefinite = an,
}
\DeclareAcronym{mas}{
    short = MAS,
    long  = Multi-Agent System,
    short-indefinite = an,
}
\DeclareAcronym{mdp}{
    short = MDP,
    long  = Markov decision process,
    short-indefinite = an,
}
\DeclareAcronym{mg}{
    short = MG,
    long  = Markov Game,
    short-indefinite = an,
}
\DeclareAcronym{ml}{
    short = ML,
    long  = Machine Learning,
    short-indefinite = an,
}
\DeclareAcronym{mtv}{
    short = MTV,
    long  = Minimum Translation Vector,
    short-indefinite = an,
}
\DeclareAcronym{mpc}{
    short = MPC,
    long  = model predictive control,
    short-indefinite = an,
}
\DeclareAcronym{marl}{
    short = MARL,
    long  = Multi-Agent Reinforcement Learning,
    short-indefinite = an,
}
\DeclareAcronym{ocp}{
    short = OCP,
    long  = Optimal Control Problem,
    short-indefinite = an,
    long-indefinite = an,
}
\DeclareAcronym{per}{
    short = PER,
    long  = Prioritized Experience Replay
}
\DeclareAcronym{pomdp}{
    short = POMDP,
    long  = Partially Observable \ac{mdp}
}
\DeclareAcronym{pomg}{
    short = POMG,
    long  = Partially Observable \ac{mg}
}
\DeclareAcronym{ppo}{
    short = PPO,
    long  = Proximal Policy Optimization
}
\DeclareAcronym{qp}{
    short = QP,
    long  = Quadratic Program,
}
\DeclareAcronym{rhc}{
    short = RHC,
    long  = receding horizon control,
    short-indefinite = an,
}
\DeclareAcronym{rl}{
    short = RL,
    long  = Reinforcement Learning,
    short-indefinite = an,
}
\DeclareAcronym{sat}{
    short = SAT,
    long = Separating Axis Theorem,
    short-indefinite = an
}
\DeclareAcronym{som}{
    short = SOM,
    long  = Self Other-Modeling,
    short-indefinite = a,
}
\DeclareAcronym{zsg}{
    short = ZSG,
    long  = Zero-Shot Generalization,
}
\newglossaryentry{def:agent}{
	name=agent,
	description={An agent is a system which is composed of at least one of the three elements: sensors, actuators, and a dynamic behavior.%
    },
}
\newglossaryentry{def:agentActive}{
	name=active agent,
	description={Active \glspl{def:agent} are \glspl{def:agent} which are connected using a communication
    network over which they can exchange data. The exchanged data is
    used by the \glspl{def:agent}' controllers to find appropriate inputs to achieve their
    goals while interacting with other \glspl{def:agent}.
    Additionally, active \glspl{def:agent} consider \glspl{def:agentPassive}},
    parent=def:agent,
}
\newglossaryentry{def:agentPassive}{
	name=passive agent,
	description={Passive \glspl{def:agent} are \glspl{def:agent} without networked control. However, they can communicate their data like current and future states to \glspl{def:agentActive}, or they can be detected by \glspl{def:agentActive}' sensors.%
    },
    parent=def:agent,
}
\newglossaryentry{def:distrutedSolutionQuality}{
	name=distributed solution quality,
	description={%
        The quality $q\in [0,1]$ of the solution in \ac{dmpc} is the networked objective function value ${\fcnObjective}_{c}$ for the solution of the corresponding \ac{cmpc} formulation divided by the objective function value ${\fcnObjective_d}$ for the solution of the \ac{dmpc} formulation
        \begin{equation}
            q = \frac{\fcnObjective_c}{\fcnObjective_d}.
        \end{equation}
    },
}
\newglossaryentry{def:mas}{
	name=multi-agent system,
	description={A system consisting of multiple \glspl{def:agent}.%
    },
}
\newglossaryentry{def:ncs}{
	name=networked control system,
	description={A system consisting of multiple \glspl{def:agentActive}.%
    },
}
\newglossaryentry{def:prediction}{
	name=prediction,
	description={
        A prediction $\agentPrediction^{\anAgent}_{\cdot\vert \timestep}$ of \gls{def:agent} $\anAgent$ is its predicted state trajectory as obtained from the solution of its \ac{ocp} at time $\timestep$.
        A prediction $\agentPredictionForAgentAFromAgentBAtTimeC{\anAgent}{\anotherAgent}{\timestep}$ of \gls{def:agent} $\anAgent$ for \gls{def:agent} $\anotherAgent$ is agent $\anotherAgent$'s state trajectory as viewed from agent $\anAgent$ at time $\timestep$. It is obtained by communication or by predicting \gls{def:agent} $\anotherAgent$'s state trajectory with its model using the solution to its \ac{ocp}.%
    },
}
\newglossaryentry{def:consistency}{
	name=prediction consistency,
	description={%
        \Iac{ncs} is prediction consistent at time step $\timestep$ if the \gls{def:prediction} \agentPredictionForAgentAFromAgentBAtTimeC{\anotherAgent}{\anAgent}{\timestep} of every agent $\anAgent\in\setAgents$ for each of its neighbors $\anotherAgent \in \setNeighbors{\anAgent}$ equals the actual \gls{def:prediction} $\agentPrediction^{(\anotherAgent)}_{\cdot\vert \timestep}$ of its neighbors, i.e.,
        \begin{equation}
            \agentPredictionForAgentAFromAgentBAtTimeC{\anotherAgent}{\anAgent}{\timestep}=\agentPrediction^{(\anotherAgent)}_{\cdot\vert \timestep}, \quad \forall \anAgent \in \setAgents, \forall \anotherAgent \in \setNeighbors{\anAgent}.
        \end{equation}%
    }
}
\newglossaryentry{def:ncsFeasible}{
	name=NCS-feasible,
	description={
        The solutions $\sysControlInputs_{\cdot \vert \timestep}\ofAgent{\anAgent}$ of all agents $i\in\setAgents$ are \acs*{ncs}-feasible if the stacked solution vector $\bm{U}_{\cdot \vert \timestep} = \left( \sysControlInputs_{\cdot \vert \timestep}\ofAgent{1}, \ldots, \sysControlInputs_{\cdot \vert \timestep}\ofAgent{\numAgents} \right)\transposed$ satisfies all constraints of the corresponding central \acf*{ocp} considering all agents.%
    },
}
\newglossaryentry{def:feasibleAgent}{
	name=agent-feasible,
	description={%
        A solution is agent-feasible if it satisfies the constraints of to the corresponding agent's \ac{ocp}.%
    },
}
\newglossaryentry{def:networkedObjectiveFunction}{
	name=networked objective function,
	description={%
        The objective function value ${\fcnObjective}$ in \iac{ncs} formulation is the sum of all agent objective functions \fcnObjective[i]
        \begin{equation}
            \fcnObjective = \sum_{i}^{i\in\setAgents} \fcnObjective[i].
        \end{equation}
    },
}
\newglossaryentry{def:optimalPriorityAssignment}{
	name=optimal priority assignment,
	description={%
        The optimal priority assignment results in a feasible solution for every agent with the lowest networked objective function value.%
    },
}
\newglossaryentry{def:graph}{
	name=graph,
	description={%
        A directed graph $\graphDirected = \left(\setVertices,\setEdges\right)$ is a pair of two sets,
        the set of vertices $\setVertices=\set{1,\dots,\numAgents}$
        and the set of directed edges $\setEdges \subseteq \setVertices \times \setVertices$.
        The edge from $i$ to $j$ is denoted by $\edgeDirected{i}{j}$.
        An undirected graph $\graphUndirected = \left(\setVertices,\setEdges\right)$ is a special form of a directed graph in which every edge is directed both ways, i.e., $\edgeDirected{i}{j} \in \setEdges \iff \edgeDirected{j}{i} \in \setEdges$.
    },
}
\newglossaryentry{def:path}{
	name=path,
	description={%
        A path of a graph $\graphDirected$ is a subgraph $\graphDirected_{\graphPath} = \left(\setVertices_{\graphPath},\setEdges_{\graphPath}\right)\subseteq\graphDirected$ with distinct vertices $\setVertices_{\graphPath}=\{i_{1},i_{2},i_{3},\ldots,i_{\numVerticesPath-1},i_{\numVerticesPath}\}$ and distinct edges $\setEdges_{\graphPath}=\{\edgeDirected{i_{1}}{i_{2}},\edgeDirected{i_{2}}{i_{3}},\ldots,\edgeDirected{i_{\numVerticesPath-1}}{i_{\numVerticesPath}}\}$, with $\numVerticesPath$ being the number of vertices of the path. The length of the path is defined as $\numVerticesPath-1$.
    },
}
\newglossaryentry{def:graph:adjacency}{
	name=adjacency,
	description={%
    A vertex $j$ is a predecessor of vertex $i$ iff $\edgeDirected{j}{i}\in\setEdges$.
    The set of predecessors of vertex $i$ is denoted by
    \begin{equation}
        \setPredecessors{i}=\set{j \mid \edgeDirected{j}{i}\in\setEdges}.
    \end{equation}
    A vertex $j$ is a successor of vertex $i$ iff $\edgeDirected{i}{j}\in\setEdges$.
    The set of successors of vertex $i$ is denoted by
    \begin{equation}
        \setSuccessors{i}=\set{j \mid \edgeDirected{i}{j}\in\setEdges}.
    \end{equation}
    A vertex $j$ is a neighbor to or adjacent to vertex $i$ if it is either a predecessor or a successor.
    The set of neighbors of vertex $i$ is denoted by
    \begin{equation}
        \setNeighbors{i}= \setSuccessors{i} \cup \setPredecessors{i}.
    \end{equation}
    },
    parent=def:graph,
}
\newglossaryentry{def:graph:degree}{
	name=degree,
	description={%
        The degree $\vertexDegree{i} = \lvert \setNeighbors{i} \rvert$ denotes the number of the adjacent vertices of vertex $i$. 
        The number of incoming edges called in-degree is denoted by $\vertexInDegree{i} = \lvert \setPredecessors{i} \rvert$.
        The number of outgoing edges called out-degree is denoted by $\vertexOutDegree{i} = \lvert \setSuccessors{i} \rvert$.%
    },
    parent=def:graph,
}
\newglossaryentry{def:couplingGraph}{
	name=coupling graph,
	description={A coupling graph $\graphDirected=(\setVertices,\setEdges)$ is a graph that represents the interaction between agents. Vertices represent agents and edges denote coupling objectives or constraints. A vertex from agent $i$ to agent $j$ corresponds to a coupling objective or constraint in the \ac{ocp} of agent $j$.%
    },
}
\newglossaryentry{def:matrix:Adjacency}{
	name=adjacency matrix,
	description={An adjacency matrix represents a graph with $\numAgents$ vertices in a matrix $\matAdjacency \in \set{0,1}^{\numAgents\times\numAgents}$ with entries
    \begin{equation}
        \matAdjacencyElement{ij} =
            \begin{cases}
                1 & \text{ if } \edgeDirected{i}{j} \in \setEdges \\
                0 & \text{ otherwise.}
            \end{cases}
    \end{equation}
    },
}
\newglossaryentry{def:tCompNcs}{
	name=computation time of \iac{ncs},
	description={%
        The computation time $\tCompNcs$ of \iac{ncs} is the time required for the \ac{ncs} to measure the states, formulate and solve the \ac{ocp}, apply the inputs to all agents, and communicate the required data. 
    },
}
\newglossaryentry{def:setReachable}{
	name=reachable set,
	description={%
        The reachable set of states $\setReachable$ of an agent from an initial time $t_{\text{init.}}$ to an end time $t_{\text{end}}$ is
            \begin{equation}\label{eq:setReachable}
                \setReachable_{[t_{\text{init.}},t_{\text{end}}] \mid t_{\text{init.}}} = \biggl\{ \int_{t_{\text{init.}}}^{t_{\text{end}}} \sysModelContinuous(\sysState,\sysControlInputs)dt
                \biggm| \sysState(t_{\text{init.}}) \in \setFeasibleStates(t_{\text{init.}}), \forall t: \sysControlInputs \in \setFeasibleInputs \biggr\},
            \end{equation}
        with the possible system initial states $\sysState(t_{\text{init.}})$ being bounded by its initially admissible set $\setFeasibleStates(t_{\text{init.}}) \subseteq \setRealNumbers^{\numStates}$, and the possible system control inputs $\sysControlInputs$ being bounded by its admissible set $\setFeasibleInputs \subseteq \setRealNumbers^{\numInputs}$.
    },
}
\newglossaryentry{def:conflictualDecisions}{
	name=conflictual decisions in \iac{ncs},
	description={%
        Consider two decisions made by two agents of \iac{ncs} at time step $\timestep$ with a duration $N_k$.
        They are deemed conflictual if the predicted outcome of the decisions violates the \ac{ncs}-feasibility at some point in time.%
    },
}
\newglossaryentry{def:conflictualSpace}{
	name=conflictual space of \iac{ncs},
	description={%
        In dynamic systems, the state space represents the set of all possible states the systems can occupy. 
        The conflictual space refers to a subset, or potentially the entirety, of this state space where whether decisions are conflictual is determined.
    },
}
\def\BibTeX{{\rm B\kern-.05em{\sc i\kern-.025em b}\kern-.08em
  T\kern-.1667em\lower.7ex\hbox{E}\kern-.125emX}}
\begin{document}
\title{\LARGE \bf
    A Learning-Based Control Barrier Function for Car-Like Robots: Toward Less Conservative Collision Avoidance
    \thanks{This research was supported by the Bundesministerium für Digitales und Verkehr (German Federal Ministry for Digital and Transport) within the project ``Harmonizing Mobility'' (grant number 19FS2035A).}
}

\author{
    Jianye Xu$^{1}$\,\orcidlink{0009-0001-0150-2147},~\IEEEmembership{Student~Member,~IEEE},
    Bassam Alrifaee$^{2}$\,\orcidlink{0000-0002-5982-021X},~\IEEEmembership{Senior Member, ~IEEE}
    \thanks{$^{1}$Department of Computer Science, RWTH Aachen University, Germany, \texttt{xu@embedded.rwth-aachen.de}}
    \thanks{$^{2}$Department of Aerospace Engineering, University of the Bundeswehr Munich, Germany, \texttt{bassam.alrifaee@unibw.de}}
}
    \maketitle
\setpagestyle 

\begin{abstract}
\noindent
We propose a learning-based \ac{cbf} to reduce conservatism in collision avoidance for car-like robots. Traditional \acp{cbf} often use the Euclidean distance between robots' centers as a safety margin, which neglects their headings and approximates their geometries as circles. Although this simplification meets the smoothness and differentiability requirements of \acp{cbf}, it may result in overly conservative behavior in dense environments. We address this by designing a safety margin that considers both the robot's heading and actual shape, thereby enabling a more precise estimation of safe regions. Because this safety margin is non-differentiable, we approximate it with a neural network to ensure differentiability. In addition, we propose a notion of relative dynamics that makes the learning process tractable. In a case study, we establish the theoretical foundation for applying this notion to a nonlinear kinematic bicycle model. Numerical experiments in overtaking and bypassing scenarios show that our approach reduces conservatism (e.g., requiring \SI{33.5}{\percent} less lateral space for bypassing) without incurring significant extra computation time.
\par\medskip
\noindent
Code: \href{https://github.com/bassamlab/sigmarl}{\small github.com/bassamlab/sigmarl}
\end{abstract}

\acresetall  

\section{Introduction}\label{sec:introduction}
Collision avoidance for car-like robots often involves non-convex optimization problems \cite{zhang2021optimizationbased}. \Acp{cbf} provide a tool to replace non-convex safety constraints with affine constraints in the control input \cite{ames2017control}. However, standard \ac{cbf} formulations tend to over-approximate the actual geometry of car-like robots because they require a continuous and differentiable function \cite{ames2014control}. A common simplification is to approximate the robots as circles, which simplifies distance computation but ignores the robots' actual shapes and headings \cite{chen2017obstacle, wang2017safety, xiao2019control, xu2025highorder, xiao2023barriernet, zhang2023spatialtemporalaware, han2024multiagent, chen2020guaranteed, singletary2021comparative, gao2023online, han2025riskaware}. While this ensures the continuity and differentiability required by \acp{cbf}, it can lead to overly conservative behaviors, especially in dense environments. \Cref{fig_over_approximation} illustrates an example of conservatism caused by the circle approximation, where vehicle $i$ is prevented from overtaking vehicle $j$. Our work aims to address this limitation by incorporating the actual geometries and headings of car-like robots into the safety constraints, thereby enabling less conservative collision avoidance.

\begin{figure}[t]
    \centering
    \includegraphics[width=0.8\linewidth]{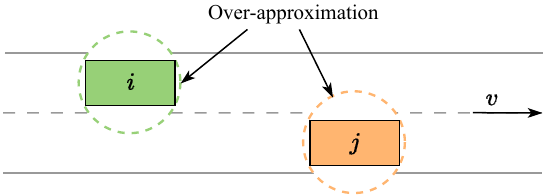}
    \caption{An example of conservatism caused by the circle approximation. Vehicle $i$ is prevented from overtaking $j$.}
    \label{fig_over_approximation}
\end{figure}

\subsection{Related Work}\label{sec:related}
Collision avoidance for car-like robots is a well-studied problem. Although optimization-based approaches like model predictive control \cite{ji2016path, alrifaee2017networkeda, scheffe2023recedinga, scheffe2023sequentiala} are widely used, they can be computationally intensive. \Acp{cbf} offer a way to enforce safety constraints in a computationally efficient manner \cite{ames2014control}.

Traditional \acp{cbf} for collision avoidance typically simplify the robot geometries as circles and define safety margins based on the Euclidean distance between the circles' centers, referred to as the \ac{c2c}-based safety margin. In \cite{chen2017obstacle}, \acp{cbf} were applied to ensure a safe distance from an avoidable set that establishes safe boundaries around round-shaped moving obstacles. In \cite{wang2017safety}, safety barrier certificates were introduced for collision avoidance in multi-robot systems, where each robot is approximated as a circle. In \cite{xiao2019control}, \acp{cbf} were extended to constraints with high relative degrees, validated using circular robots. In our previous work \cite{xu2025highorder}, a method called Truncated Taylor \ac{cbf} was proposed to simplify control design for constraints with high relative degrees, with numerical experiments conducted on circular robots. In \cite{xiao2023barriernet}, off-center disks were used to reduce the conservatism inherent in the \ac{c2c}-based safety margin. In the context of \acp{cav}, \cite{zhang2023spatialtemporalaware} and \cite{han2024multiagent} applied a \ac{c2c}-based safety margin within a multi-agent \ac{rl} framework to ensure the safety of the learned policies. Other works employing the \ac{c2c}-based safety margin include \cite{chen2020guaranteed, singletary2021comparative, gao2023online, han2025riskaware}. Although approximating robots as circles simplifies computations and maintains the differentiability required by \acp{cbf}, this simplification does not capture the actual shapes and orientations of car-like robots. Such approximation can lead to conservative behaviors, restricting their ability to navigate efficiently in complex environments.

To improve upon the circle approximation, some researchers approximate robots or obstacles as ellipses (or ellipsoids in the case of a 3D space), which better represent their elongated shapes, although computing their distances is challenging. Early work in \cite{rimon1997obstacle} proposed a conservative distance estimate between ellipsoids, formulated as an eigenvalue problem. Study \cite{verginis2019closedform} derived a closed-form expression that represents a distance metric between two ellipsoids in a 3D space. In \cite{glotzbach2010advanceda}, an approach for trajectory replanning of unmanned marine vehicles was adopted in which obstacles are modeled as ellipses that are augmented to consider the vehicle width. In \cite{tanner2003nonholonomic}, robots and obstacles were represented by sets of ellipsoids, and a point-world transformation is proposed to convert these ellipsoids to points to simplify collision avoidance. Some works use a mixture of circles and ellipses for shape approximation by either approximating the ego robot with a circle and its surrounding robots with ellipses \cite{schwarting2018safe, jian2023dynamic} or vice versa \cite{liu2024improved}. These approaches reduce conservatism compared to a purely circle-based approximation but still do not fully capture the actual shape of car-like robots. Note that among these works, \acp{cbf} are only used in \cite{verginis2019closedform} and \cite{jian2023dynamic}. 

\subsection{Paper Contributions}\label{sec:contributions}
Our main contributions are threefold.
\begin{enumerate}
    \item We introduce \iac{mtv}-based, non-differentiable safety margin for collision avoidance of car-like robots that accounts for their actual geometries and headings. This safety margin enables a more precise estimation of safe regions and therefore reduces conservatism.
    \item We propose the notion of relative dynamics to enable tractable learning of this non-differentiable safety margin via a differentiable neural network, making it suitable as a candidate \ac{cbf}.
    \item We conduct a case study in which we establish the theoretical foundation for applying our proposed safety margin to car-like robots modeled by a nonlinear kinematic bicycle model.
\end{enumerate}
Although the concept of MTV is widely used in collision avoidance \cite{ericson2004real}, our work is the first to incorporate it into \acp{cbf} for collision avoidance of car-like robots.

\subsection{Notation}\label{sec:notation}
A variable $x$ is annotated with a superscript $x^i$ if it belongs to robot $i$. A relative variable includes two letters in its superscript to indicate the direction, e.g., $x^{ji}$ denotes $x$ of robot $j$ relative to that of robot $i$. If the relative variable is expressed in robot $i$'s ego perspective rather than in the global coordinate system, an underline is used, e.g., $x^{j\underline{i}}$. Vectors are generally denoted in bold. The dot product of two vectors $\bm{a}$ and $\bm{b}$ is represented by $\bm{a} \cdot \bm{b}$. Time arguments of time-varying variables are often omitted for brevity.

\subsection{Paper Structure}
\Cref{sec:preliminaries} revisits the kinematic bicycle model and \ac{cbf}. \Cref{sec:main} introduces our \ac{mtv}-based safety margin and its integration into \ac{cbf}. \Cref{sec:case-study} presents our case study and establishes a theoretical foundation for applying the proposed safety margin for collision avoidance of car-like robots modeled by the kinematic bicycle model. Finally, \Cref{sec:conclusions} concludes and outlines future research directions.

\section{Preliminaries}\label{sec:preliminaries}
We consider an input-affine control system given by
\begin{equation} \label{eq:general-dynamics}
    \dot{\bm{x}} = f(\bm{x}) + g(\bm{x})\bm{u}.
\end{equation}
Here, $f: \mathbb{R}^n \to \mathbb{R}^n$ and $g: \mathbb{R}^n \to \mathbb{R}^{n \times m}$ are Lipschitz continuous. $\bm{x} \in \mathcal{X} \subset \mathbb{R}^n$ is the state vector, and $\bm{u} \in \mathcal{U} \subset \mathbb{R}^m$ is the control input vector, where $n$ and $m$ denote the dimensions of the state space and control input space, and $\mathcal{X}$ and $\mathcal{U}$ are their respective admissible sets.

\subsection{Kinematic Bicycle Model} \label{sec:bicycle-model}
We employ the kinematic bicycle model to model the dynamics of car-like robots in our case study, as it captures the essential dynamics required for motion planning and control \cite{rajamani2011vehicle, polack2017kinematic}. This model approximates the robot as a single-track model with two wheels, as depicted in \cref{fig:bicycle-model}.

We define the state vector as $\bm{x} \coloneqq [x, y, \psi, v, \delta]^\top \in \mathbb{R}^5$, where $x$ and $y$ denote the position in the global coordinate system, and $\psi$, $v$, and $\delta$ denote the heading (or yaw angle), speed, and steering angle, respectively. The control input vector is defined as $\bm{u} \coloneqq [u_v, u_\delta]^\top \in \mathbb{R}^2$, where $u_v$ denotes the acceleration and $u_\delta$ the steering rate. The dynamics of the kinematic bicycle model are given by
\begin{equation} \label{eq:kinematic-bicycle-model}
    \dot{\bm{x}} =
    \begin{bmatrix}
        v \cos(\psi + \beta) \\
        v \sin(\psi + \beta) \\
        \dfrac{v}{\ell_{wb}} \tan(\delta) \cos(\beta) \\
        0 \\
        0
    \end{bmatrix}
    +
    \begin{bmatrix}
    0 & 0 \\
    0 & 0 \\
    0 & 0 \\
    1 & 0 \\
    0 & 1
    \end{bmatrix}
    \begin{bmatrix}
        u_v \\
        u_\delta
    \end{bmatrix},
\end{equation}
where $\ell_{wb} \in \mathbb{R}$ denotes the wheelbase of the vehicle and the slip angle $\beta \in \mathbb{R}$ is computed as
\begin{equation} \label{eq:beta}
    \beta = \tan^{-1} \left( \dfrac{\ell_r}{\ell_{wb}} \tan \delta \right),
\end{equation}
with $\ell_r \in \mathbb{R}$ representing the rear wheelbase.

\begin{figure}[t!]
    \centering
    \includegraphics[width=0.8\linewidth]{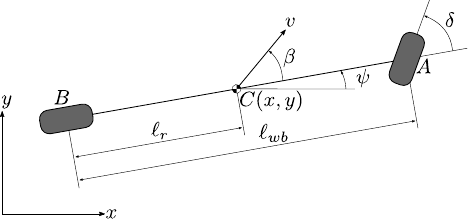}
    \caption{
        The kinematic bicycle model.
        $C$: center of gravity;
        $x, y$: $x$- and $y$-coordinates;
        $v$: velocity;
        $\beta$: slip angle;
        $\psi$: yaw angle;
        $\delta$: steering angle;
        $\ell_{wb}$: wheelbase;
        $\ell_{r}$: rear wheelbase.
    }
    \label{fig:bicycle-model}
\end{figure}

\subsection{Control Barrier Functions}
\Acp{cbf} provide a tool to enforce safety constraints by ensuring that the system's state remains within a safe set. Assume the safe set is defined as
\begin{equation} \label{eq:safe-set}
    C = \{ \bm{x} \in \mathcal{X} \mid h(\bm{x}) \geq 0 \},
\end{equation}
where $h: \mathcal{X} \rightarrow \mathbb{R}$ is a continuously differentiable function.

\begin{definition}[Class $\mathcal{K}$ Function]
A Lipschitz continuous function $\alpha: \mathbb{R} \rightarrow \mathbb{R}$ is said to belong to class $\mathcal{K}$ if it is strictly increasing and satisfies $\alpha(0) = 0$. 
\end{definition}

\begin{definition}[\acl{cbf} \cite{ames2014control, ames2017control}]\label{def:cbf} 
Given a set $C$ as in \eqref{eq:safe-set}, a continuously differentiable function $h: \mathcal{X} \rightarrow \mathbb{R}$ is a candidate \ac{cbf} for system \eqref{eq:general-dynamics} if there exists a class $\mathcal{K}$ function $\alpha$ such that 
\begin{equation*}
    \sup_{\bm{u} \in \mathcal{U}} \dot{h}(\bm{x}, \bm{u}) \geq 0, \quad \forall \bm{x} \in C.
\end{equation*}
Here, $\dot{h}(\bm{x}, \bm{u})$ denotes the time derivative of $h(\bm{x})$ and is given by $\frac{dh(\bm{x})}{dt} = \frac{\partial h(\bm{x})}{\partial \bm{x}} \dot{\bm{x}} = \frac{\partial h(\bm{x})}{\partial \bm{x}} \left(f(\bm{x}) + g(\bm{x}) \bm{u}\right)$, which is equivalent to the Lie derivative expression
$\frac{dh(\bm{x})}{dt} = L_f h(\bm{x}) + L_g h(\bm{x}) \bm{u}$, with $L_f$ and $L_g$ denoting the Lie derivatives along $f$ and $g$, respectively.
\end{definition}

\begin{definition}[Forward Invariant] \label{def:forward-invariant} 
A set $C \subset \mathcal{X}$ is forward invariant for system \eqref{eq:general-dynamics} if for any initial state $\bm{x}(t_0) \in C$, the solution satisfies $\bm{x}(t) \in C$ for all $t \geq t_0$. 
\end{definition}

In practice, \acp{cbf} often have a high \emph{relative degree}. The relative degree of a continuously differentiable function w.r.t. a system is the number of times the function must be differentiated along the system dynamics until the control input explicitly appears.

\begin{definition}[Relative Degree \cite{xiao2019control}] \label{def:relative-degree}
    A continuously differentiable function $ h: \mathcal{X} \to \mathbb{R} $ is said to have relative degree $ r \in \mathbb{N} $ w.r.t. system \eqref{eq:general-dynamics} if for all $\bm{x} \in \mathcal{X}$, $L_g L_f^{i} h(\bm{x}) = 0, \forall i \in \{0, 1, \ldots, r-2\}$, and $L_g L_f^{r-1} h(\bm{x}) \neq 0$.
\end{definition}

Let $h(\bm{x})$ be continuously differentiable. Define
\begin{equation} \label{eq:psi-0}
    \Psi_0(\bm{x}) \coloneqq h(\bm{x}),
\end{equation}
and recursively define 
\begin{equation}\label{eq:psi-r}
    \Psi_i(\bm{x}) \coloneqq \dot{\Psi}_{i-1}(\bm{x}) + \alpha_i\bigl(\Psi_{i-1}(\bm{x})\bigr), \quad i \in \{1,\ldots,r\},
\end{equation}
where $\alpha_i(\cdot)$ is an $(r-i)$th times differentiable class $\mathcal{K}$ function. Furthermore, define
\begin{equation} \label{eq:c-r}
    C_i \coloneqq \{\bm{x} \in \mathcal{X}: \Psi_{i-1}(\bm{x}) \geq 0\}, \quad i \in \{1,\ldots,r\}.
\end{equation}

\begin{definition}[High-Order \acp{cbf} \cite{xiao2019control}]\label{def:hocbf}
Given the sets $C_i$ as in \eqref{eq:c-r} and functions $\Psi_i$ as in \eqref{eq:psi-r} for all $i \in \{1,\ldots,r\}$, a function $h: \mathbb{R}^n \to \mathbb{R}$ is a candidate \ac{hocbf} with relative degree $r$ for system \eqref{eq:general-dynamics} if it is at least $r$th times differentiable and there exist $(r-i)$th times differentiable class $\mathcal{K}$ functions $\alpha_i$, for $i \in \{1,\ldots,r\}$, such that
\begin{equation} \label{eq:hocbf-constraint}
    \sup_{\bm{u} \in \mathcal{U}} \Psi_{r}(\bm{x}) \geq 0, \quad \forall \bm{x} \in \bigcap_{i=1}^{r} C_i.
\end{equation}
\end{definition}

\begin{theorem}[Thm. 4 in \cite{xiao2019control}]\label{theorem:hocbf-forward-invariant}
Given a candidate \ac{hocbf} $h(\bm{x})$ for system \eqref{eq:general-dynamics} as in \cref{def:hocbf}, if the initial state satisfies $\bm{x}(t_0) \in \bigcap_{i=1}^{r} C_i$, then any Lipschitz continuous controller that satisfies \eqref{eq:hocbf-constraint} renders the set $\bigcap_{i=1}^{r} C_i$ forward invariant for all $t \geq t_0$.
\end{theorem}

\section{MTV-Based Safety Margin as \acp{cbf}}\label{sec:main}
In this section, we introduce an MTV-based safety margin for collision avoidance of car-like robots that takes into account their actual geometries and headings. We also describe how to incorporate this safety margin into a \ac{cbf} framework.

\subsection{MTV-Based Safety Margin}
The \ac{sat} is a fundamental concept in computational geometry for detecting collisions between convex shapes \cite{gottschalk1996obbtree}. It states that two convex shapes do not intersect if there exists an axis along which their projections do not overlap. If no such axis exists, the shapes are colliding. In this context, the MTV represents the smallest vector required to separate the shapes \cite{ericson2004real}. We extend the concept of the MTV to define a safety margin for two car-like robots, which are approximated as rectangles. This safety margin represents the minimal movement required for one robot to make contact with the other, taking into account their geometries and headings.

\Cref{alg:mtv} details the computation of the MTV-based safety margin for a given pair of rectangles $i$ and $j$. For each rectangle $k \in \{ i, j \}$ and for each of its orthogonal axes $a \in \{ x_A^k, y_A^k \}$, the algorithm projects the vertices of both rectangles onto axis $a$. If the projections do not overlap on axis $a$, the gap $g_a$ is positive and represents the separation; otherwise, $g_a$ is negative and corresponds to the overlapping length. Based on the signs of $g_{x_A^k}$ and $g_{y_A^k}$, the algorithm classifies the relative position as mutually separating, overlapping along one axis, or non-separating, and then computes a distance metric $d^k$ accordingly (lines \ref{line:if-overlap-begin} to \ref{line:if-overlap-end}). Finally, the safety margin $d_{\text{MTV}}$ is determined based on $d^i$ and $d^j$ (lines \ref{line:if-separate-begin} to \ref{line:if-separate-end}). \Cref{fig_mtv} illustrates a case where $g_{x_A^i} < 0$, $g_{y_A^i} < 0$, $g_{x_A^j} > 0$, and $g_{y_A^j} < 0$, resulting in $d^i = -\min(|g_{x_A^i}|, |g_{y_A^i}|) < 0$, $d^j = g_{y_A^j} > 0$, and hence $d_{\text{MTV}} = \max(d^i, d^j) = d^j$.

\renewcommand{\Comment}[2][.5\linewidth]{%
  \hfill{\raggedright\scriptsize\texttt{\textcolor{black}{$\triangleright$ #2}}}}
  
\begin{algorithm}[t] 
\caption{MTV-Based Safety Margin}
\label{alg:mtv} 
\begin{algorithmic}[1] 
\Input Positions, headings, widths, and lengths of rectangles $i$ and $j$ 
\Output Safety margin $d_{\text{MTV}}$ 

\State Compute the positions of both rectangles' vertices.
\For{each rectangle $k \in \{i, j\}$} 
    \State Compute the pair of orthogonal axes $\{x_A^k, y_A^k\}$ of $k$.

    \For{each axis $a \in \{x_A^k, y_A^k\}$}
        \State Project both rectangles' vertices onto axis $a$. 

        \If{the projections do not overlap on axis $a$} 
            \State $g_a \gets$ gap between the projections (positive).
        \Else 
            \State $g_a \gets$ overlapping length (negative).
        \EndIf
    \EndFor 

    \If{both $g_{x_A^k}, g_{y_A^k} > 0$} \Comment{Mutually separating} \label{line:if-overlap-begin}
        \State $d^k = \sqrt{(g_{x_A^k})^2 + (g_{y_A^k})^2}$.
    \ElsIf{both $g_{x_A^k}, g_{y_A^k} < 0$} \Comment{Separating by one rectangle's axes}
        \State $d^k = -\min(|g_{x_A^k}|, |g_{y_A^k}|)$. 
    \Else \Comment{Non-separating}
        \State $d^k = \max(g_{x_A^k}, g_{y_A^k})$.  
    \EndIf \label{line:if-overlap-end}
\EndFor 
\If{both $d^i, d^j > 0$} \label{line:if-separate-begin}
    \Comment{$i$ and $j$ are separated by both their axes} 
    \State $d_{\text{MTV}} = \min(d^i, d^j)$. 
\ElsIf{both $d^i, d^j < 0$} 
    \Comment{$i$ and $j$ are unseparated}
    \State $d_{\text{MTV}} = -\min(|d^i|, |d^j|)$.
\Else 
    \Comment{One rectangle overlaps the other along its axes} 
    \State $d_{\text{MTV}} = \max(d^i, d^j)$.
\EndIf \label{line:if-separate-end}

\State \Return $d_{\text{MTV}}$ 
\end{algorithmic} 
\end{algorithm}

\begin{figure}[t]
    \centering
    \includegraphics[width=0.5\linewidth]{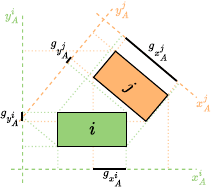}
    \caption{An example illustrating \cref{alg:mtv} for the case where $g_{x_A^i} < 0$, $g_{y_A^i} < 0$, $g_{x_A^j} > 0$, and $g_{y_A^j} < 0$.}
    \label{fig_mtv}
\end{figure}

\subsection{Learning-Based Safety-Margin Approximator}\label{sec:approximator}
To enable our MTV-based safety margin as a valid candidate \ac{cbf}, we approximate it using a neural network to obtain a differentiable safety-margin approximator. 

\Cref{alg:mtv} requires the positions, headings, widths, and lengths of the two rectangles, which results in a ten-dimensional input space. This high dimensionality can make learning intractable. To reduce the input space, we introduce the concept of \textit{relative dynamics} by using the relative position and relative heading of one rectangle from the perspective of the other. We denote the relative dynamics as
\begin{equation} \label{eq:relative-dynamics}
    \dot{\bm{x}}^{j\underline{i}} = f^{j\underline{i}}(\bm{x}^{j\underline{i}}) + g^{j\underline{i}}(\bm{x}^{j\underline{i}}) \bm{u},
\end{equation}
where $f^{j\underline{i}}: \mathbb{R}^3 \to \mathbb{R}^3$ and $g^{j\underline{i}}: \mathbb{R}^3 \to \mathbb{R}^{3 \times 4}$ are Lipschitz continuous. The relative state is defined as $\bm{x}^{j\underline{i}} \coloneqq [x^{j\underline{i}}, y^{j\underline{i}}, \psi^{j\underline{i}}]^\top \in \mathcal{X} \subset \mathbb{R}^3$, where $(x^{j\underline{i}}, y^{j\underline{i}})$ denote the relative position and $\psi^{j\underline{i}}$ the relative heading of robot $j$ w.r.t. robot $i$, as observed from $i$'s perspective. The joint control input is $\bm{u} \coloneqq \begin{bmatrix} (\bm{u}^i)^\top, (\bm{u}^j)^\top \end{bmatrix}^\top \in \mathcal{U} \subset \mathbb{R}^4$, with $\bm{u}^i \in \mathbb{R}^2$ and $\bm{u}^j \in \mathbb{R}^2$ being the control inputs for robots $i$ and $j$, respectively. We train a neural network $h_\theta(\bm{x}^{j\underline{i}}): \mathbb{R}^3 \to \mathbb{R}$, with parameters $\theta$, to approximate the function computed by \cref{alg:mtv}. Note that the width and length of the rectangles are not included in the network input because they are time-invariant, and the network can implicitly learn the geometric relationships.

\subsection{Construction of the Control Barrier Function}\label{sec:cbf-construction}
We employ the MTV-based safety-margin approximator $h_\theta(\bm{x}^{j\underline{i}})$ as a candidate \ac{cbf}. To account for the approximation error between $h_\theta$ and the true value computed by \cref{alg:mtv}, we consider an upper bound on the approximation error, $e_{\max} > 0$. We construct the \ac{cbf} as
\begin{equation} \label{eq:h}
h_{\mathrm{MTV}}(\bm{x}^{j\underline{i}}) = h_\theta(\bm{x}^{j\underline{i}}) - e_{\max}.
\end{equation}
Although considering the upper bound of the error can introduce conservatism, this is acceptable if it is sufficiently small.

\begin{assumption} \label{ass:error-upper-bound}
The upper bound of the approximation error $e_{\max}$ in \eqref{eq:h} is known.
\end{assumption}

We note that \cref{ass:error-upper-bound} is reasonable because the input space of the neural network can be restricted to a known range. This limitation prevents out-of-distribution issues and allows a thorough evaluation of all possible inputs. The three-dimensional input space comprises the relative position $(x^{j\underline{i}}, y^{j\underline{i}})$ and the relative heading $\psi^{j\underline{i}}$, where the heading is naturally bounded within $[-\pi,\pi]$. We limit the relative position to a small yet practical range around robot $i$. If robot $j$ is outside this range, we revert to the \ac{c2c}-based safety margin, for which the conservatism has a negligible impact.

\begin{assumption} \label{ass:h-differentiable}
The function $h_{\mathrm{MTV}}(\bm{x}^{j\underline{i}})$ in \eqref{eq:h} is at least $r$th times differentiable, where $r$ is its relative degree w.r.t. system \eqref{eq:general-dynamics}.
\end{assumption}

This assumption is justified since, according to \cref{def:hocbf}, a candidate \ac{hocbf} with relative degree $r$ must be at least $r$th times differentiable. Continuous differentiability can be ensured by using a continuously differentiable activation function, such as the \texttt{TanH} function, in the neural network.

\begin{theorem} \label{theorem:collision-free}
Let $r$ denote the relative degree of $h_{\mathrm{MTV}}$ in \eqref{eq:h} w.r.t. system \eqref{eq:general-dynamics}. Redefine $\Psi_0$ as in \eqref{eq:psi-0}, $\Psi_i$ as in \eqref{eq:psi-r}, and $C_i$ as in \eqref{eq:c-r} using $h_{\mathrm{MTV}}$, for all $i \in \{ 1, \ldots, r \}$. If the initial state satisfies $\bm{x}^{j\underline{i}}(t_0) \in \bigcap_{i=1}^{r} C_i$, then any Lipschitz continuous controller that satisfies
\begin{equation} \label{eq:k-cbf}
    \sup_{\bm{u} \in \mathcal{U}} \Psi_r\bigl(\bm{x}^{j\underline{i}}(t)\bigr) \geq 0, \quad \forall \bm{x}^{j\underline{i}}(t) \in \bigcap_{i=1}^{r} C_i,
\end{equation}
renders system \eqref{eq:relative-dynamics} collision-free for all $t \geq t_0$.
\end{theorem}

\begin{proof}
Since $h_{\mathrm{MTV}}$ is at least $r$th times differentiable, it qualifies as a candidate \ac{hocbf} with relative degree $r$. By \cite[Thm. 4]{xiao2019control}, the set $C_1$ is forward invariant for system \eqref{eq:relative-dynamics}. This implies that the state $\bm{x}^{j\underline{i}}(t)$ remains in $C_1$, i.e., $h_{\mathrm{MTV}}(\bm{x}^{j\underline{i}}) \geq 0$ for all $t \geq t_0$. With the ``conservative'' consideration of the upper bound  $e_{\max}$ of the approximation error in \eqref{eq:h}, the safety margin between robots $i$ and $j$ remains non-negative, ensuring that system \eqref{eq:relative-dynamics} remains collision-free for all $t \geq t_0$.
\end{proof}

\begin{remark}
Although the system defined in \eqref{eq:relative-dynamics} involves only two robots, \Cref{theorem:collision-free} can be extended to systems with any number of robots. For each pair of robots, we can construct a sub-system as in \eqref{eq:relative-dynamics}, resulting in $\binom{k}{2} = k(k-1)/2$ sub-systems for $k$ robots. The overall system remains collision-free as long as all sub-systems are collision-free. 
\end{remark}

\section{Case Study} \label{sec:case-study}
In this section, we conduct a case study to numerically evaluate the proposed MTV-based safety margin in simulations with two car-like robots. Without loss of generality, we employ the kinematic bicycle model \cite{rajamani2011vehicle} to model the robots' dynamics. We design two scenarios---an overtaking scenario and a bypassing scenario---to compare our MTV-based safety margin with the traditional \ac{c2c}-based safety margin. Codes for reproducing our experimental results and video demonstrations are available in our open-source repository\footnote{\href{https://github.com/bassamlab/sigmarl}{\small github.com/bassamlab/sigmarl}}.

First, we establish the theoretical foundation for applying our proposed notion of relative dynamics to the kinematic bicycle model in \Cref{sec:case-study-relative-dynamics}. Then, we describe how to train the MTV-based safety margin approximator and formulate the \ac{ocp} in \Cref{sec:ocp}. \Cref{sec:scenario-1} presents the experimental results for the overtaking scenario, and \Cref{sec:scenario-2} those for the bypassing scenario.

\subsection{Relative Dynamics of the Kinematic Bicycle Model} \label{sec:case-study-relative-dynamics}
We derive the explicit form of the relative dynamics \eqref{eq:relative-dynamics} for two car-like robots $i$ and $j$ modeled by the kinematic bicycle model \eqref{eq:kinematic-bicycle-model}. Without loss of generality, we designate robot $i$ as the ego robot and express the state of robot $j$ relative to $i$. The relative state in the global coordinate system is
\begin{equation} \label{eq:relative-state-global}
    \bm{x}^{ji} = \bm{x}^j - \bm{x}^i,
\end{equation}
where $\bm{x}^i \coloneqq \begin{bmatrix} x^i,\, y^i,\, \psi^i \end{bmatrix}^\top$ and $\bm{x}^j \coloneqq \begin{bmatrix} x^j,\, y^j,\, \psi^j \end{bmatrix}^\top$. We project $\bm{x}^{ji}$ into robot $i$'s ego coordinate system, yielding
\begin{equation} \label{eq:x-ego}
    \bm{x}^{j\underline{i}} =
    \begin{bmatrix}
        x^{j\underline{i}} \\
        y^{j\underline{i}} \\
        \psi^{j\underline{i}}
    \end{bmatrix}
    =
    \begin{bmatrix}
        x^{ji}\cos\psi^i + y^{ji}\sin\psi^i \\
        -x^{ji}\sin\psi^i + y^{ji}\cos\psi^i \\
        \psi^{ji}
    \end{bmatrix}.
\end{equation}

By differentiating \eqref{eq:x-ego} w.r.t. time, we obtain the first time derivative $\dot{\bm{x}}^{j\underline{i}} \coloneqq [\dot{x}^{j\underline{i}}, \dot{y}^{j\underline{i}}, \dot{\psi}^{j\underline{i}}]^\top$ as 
{\footnotesize
    \begin{align} \label{eq:dot-x-ego}
        \dot{x}^{j\underline{i}} &= \cos\psi^i\, \dot{x}^{ji} - \sin\psi^i\, x^{ji} \dot{\psi}^{i} + \sin\psi^i\, \dot{y}^{ji} + \cos\psi^i\, y^{ji} \dot{\psi}^{i}, \nonumber \\
        \dot{y}^{j\underline{i}} &= \cos\psi^i\, \dot{y}^{ji} - \sin\psi^i\, y^{ji} \dot{\psi}^{i} - \sin\psi^i\, \dot{x}^{ji} - \cos\psi^i\, x^{ji} \dot{\psi}^{i}, \\
        \dot{\psi}^{j\underline{i}} &= \dot{\psi}^{ji}. \nonumber
    \end{align}
}Similarly, applying the product rule to \eqref{eq:dot-x-ego} yields the second time derivative $\ddot{\bm{x}}^{j\underline{i}} \coloneqq [\ddot{x}^{j\underline{i}}, \ddot{y}^{j\underline{i}}, \ddot{\psi}^{j\underline{i}}]^\top$ as 
{\footnotesize
\begin{align} \label{eq:ddot-x-ego}
\ddot{x}^{j\underline{i}} &= \cos\psi^i\, \ddot{x}^{ji} - 2 \sin\psi^i\, \dot{x}^{ji}\dot{\psi}^{i} - x^{ji} \cos\psi^i\, (\dot{\psi}^{i})^2 \nonumber \\
&\quad - x^{ji} \sin\psi^i\, \ddot{\psi}^{i} + \sin\psi^i\, \ddot{y}^{ji} + 2 \cos\psi^i\, \dot{y}^{ji}\dot{\psi}^{i} \nonumber \\
&\quad - y^{ji} \sin\psi^i\, (\dot{\psi}^{i})^2 + y^{ji} \cos\psi^i\, \ddot{\psi}^{i}, \nonumber \\[10pt]
\ddot{y}^{j\underline{i}} &= \cos\psi^i\, \ddot{y}^{ji} - 2 \sin\psi^i\, \dot{y}^{ji}\dot{\psi}^{i} - y^{ji} \cos\psi^i\, (\dot{\psi}^{i})^2 \nonumber \\
&\quad - y^{ji} \sin\psi^i\, \ddot{\psi}^{i} - \sin\psi^i\, \ddot{x}^{ji} - 2 \cos\psi^i\, \dot{x}^{ji}\dot{\psi}^{i} \nonumber \\
&\quad + x^{ji} \sin\psi^i\, (\dot{\psi}^{i})^2 - x^{ji} \cos\psi^i\, \ddot{\psi}^{i}, \nonumber \\[10pt]
\ddot{\psi}^{j\underline{i}} &= \ddot{\psi}^{ji}. \nonumber
\end{align}
}We compute the time derivatives of $\bm{x}^{ji}$ as
\begin{equation}
    \dot{\bm{x}}^{ji} \coloneqq \dot{\bm{x}}^j - \dot{\bm{x}}^i \quad \text{and} \quad \ddot{\bm{x}}^{ji} \coloneqq \ddot{\bm{x}}^j - \ddot{\bm{x}}^i.
\end{equation}
From \eqref{eq:kinematic-bicycle-model}, we obtain $\dot{\bm{x}}^i \coloneqq [\dot{x}^i,\, \dot{y}^i,\, \dot{\psi}^i]^\top \in \mathbb{R}^3$. Differentiating this yields $\ddot{\bm{x}}^i \coloneqq [\ddot{x}^i,\, \ddot{y}^i,\, \ddot{\psi}^i]^\top$ as (omitting the superscript $i$ for brevity)
{\footnotesize
\begin{equation} \label{eq:ddot-x-global}
    \begin{bmatrix}
        \ddot{x}^i \\
        \ddot{y}^i \\
        \ddot{\psi}^i
    \end{bmatrix}
    =
    \begin{bmatrix}
        u_v \cos(\psi + \beta) - v \sin(\psi + \beta) (\dot{\psi} + \dot{\beta}) \\
        u_v \sin(\psi + \beta) + v \cos(\psi + \beta) (\dot{\psi} + \dot{\beta}) \\
        \dfrac{\cos\beta}{\ell_{wb}} \left(u_v \tan\delta + v \sec^2\delta\, u_\delta - v \tan\beta\, \tan\delta\, \dot{\beta}\right)
    \end{bmatrix},
\end{equation}
}where $\beta$ is computed as in \eqref{eq:beta}, $\dot{\beta} = \frac{k \sec^2\delta}{1 + (k \tan\delta)^2} u_\delta$, with $k \coloneqq {\ell_r}/{\ell_{wb}}$ and $\sec\delta \coloneqq {1}/{\cos\delta}$. A similar computation applies to robot $j$.

\begin{theorem} \label{theorem:relative-degree}
Function $h_{\mathrm{MTV}}(\bm{x}^{j\underline{i}})$ defined in \eqref{eq:h} w.r.t. system \eqref{eq:relative-dynamics} has relative degree two.
\end{theorem}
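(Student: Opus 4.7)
The plan is to verify \cref{def:relative-degree} directly: I would compute $\dot{h}$ and $\ddot{h}$ along the relative dynamics and show that $\dot{h}$ contains no component of $\bm{u}$ while $\ddot{h}$ contains at least one. Since $\epsilon$ is time-invariant in \labelcref{eq:h}, we have $\dot{h}=\dot{h}_\theta$ and $\ddot{h}=\ddot{h}_\theta$, so only the neural-network predictor needs to be analyzed along the trajectory, and \cref{assumption:h-differentiable} ensures the two derivatives are well-defined.

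For the first derivative, the chain rule gives $\dot{h}=\nabla_{\bm{x}^{j\underline{i}}} h_\theta \cdot \dot{\bm{x}}^{j\underline{i}}$. Each entry of $\dot{\bm{x}}^{j\underline{i}}$ in \labelcref{eq:dot-x-ego} is an affine combination of $\dot{x}^{ji},\dot{y}^{ji},\dot{\psi}^{ji}$ with coefficients in the states $\psi^i,x^{ji},y^{ji}$. Expanding $\dot{\bm{x}}^{ji}=\dot{\bm{x}}^j-\dot{\bm{x}}^i$ via the bicycle model \labelcref{eq:kinematic-bicycle-model}, the position and heading rates of each robot are functions only of $(v,\psi,\delta)$, which are state variables. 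Hence no component of $\bm{u}$ appears in $\dot{h}$, ruling out relative degree one.

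For the second derivative, differentiating once more with the chain and product rules yields
\begin{equation*}
    \ddot{h}=\dot{\bm{x}}^{j\underline{i}}{}^\top \nabla^2 h_\theta\, \dot{\bm{x}}^{j\underline{i}} + \nabla h_\theta \cdot \ddot{\bm{x}}^{j\underline{i}}.
\end{equation*}
The Hessian term depends only on states by the argument above. The remaining term contains $\ddot{\bm{x}}^{j\underline{i}}$, which by \labelcref{eq:ddot-x-ego} is an affine combination of $\ddot{x}^{ji},\ddot{y}^{ji},\ddot{\psi}^{ji}$ with state-dependent coefficients. Substituting $\ddot{\bm{x}}^{ji}=\ddot{\bm{x}}^j-\ddot{\bm{x}}^i$ together with \labelcref{eq:ddot-x-global}, the four control components $u_v^i,u_\delta^i,u_v^j,u_\delta^j$ enter linearly. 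I would then collect these contributions into the form $L_g L_f h(\bm{x}^{j\underline{i}})\,\bm{u}$, matching the HOCBF template of \cref{def:hocbf} with $m=2$.

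The hardest part will be ruling out the pathological case in which every coefficient of $\bm{u}$ in $\ddot{h}$ cancels identically on $\mathcal{D}$, which would push the relative degree above two. My plan is to trace the coefficient of, for example, $u_v^i$: by \labelcref{eq:ddot-x-global} it factors as $\cos(\psi^i+\beta^i)$ times a fixed linear combination of entries of $\nabla h_\theta$ transported through the rotation in \labelcref{eq:x-ego}. Since $\cos(\psi^i+\beta^i)$ is nonzero on an open dense subset of $\mathcal{D}$ and $h_\theta$, being a smooth nonconstant approximator of the non-trivial MTV margin (with, e.g., a \texttt{TanH} activation as noted after \cref{assumption:h-differentiable}), has $\nabla h_\theta\neq 0$ at almost every point, at least one such coefficient is nonzero somewhere on $\mathcal{D}$. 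This certifies the explicit appearance of $\bm{u}$ in $\ddot{h}$ and yields relative degree exactly two.
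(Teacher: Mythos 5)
Your proposal is correct and follows essentially the same route as the paper: differentiate $h=h_\theta-\epsilon$ twice along \labelcref{eq:relative-dynamics} using the chain and product rules, obtain exactly the paper's expressions $\dot{h}=\nabla h_\theta\cdot\dot{\bm{x}}^{j\underline{i}}$ and $\ddot{h}=\nabla h_\theta\cdot\ddot{\bm{x}}^{j\underline{i}}+(\dot{\bm{x}}^{j\underline{i}})^\top H_{h_\theta}\dot{\bm{x}}^{j\underline{i}}$, and observe via \labelcref{eq:dot-x-ego}, \labelcref{eq:ddot-x-ego}, and \labelcref{eq:ddot-x-global} that $\bm{u}$ is absent from the first derivative but enters the second. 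Your final paragraph ruling out identical cancellation of the control coefficients goes beyond the paper, which simply asserts the explicit appearance of $u_v,u_\delta$; that extra step is a reasonable (if somewhat heuristic, since $u_v^i$ actually enters through all three components of \labelcref{eq:ddot-x-global}, not only the $\cos(\psi^i+\beta^i)$ term, and the nonvanishing of $\nabla h_\theta$ is only generic) strengthening rather than a different approach.
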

\begin{proof}
We determine the relative degree by counting the number of times $h_{\mathrm{MTV}}(\bm{x}^{j\underline{i}})$ must be differentiated along the system dynamics until the control input $\bm{u}$ appears. The first time derivative is given by
\begin{equation} \label{eq:dot-h}
    \dot{h}_{\mathrm{MTV}}(\bm{x}^{j\underline{i}}) = \dot{h}_\theta(\bm{x}^{j\underline{i}}) = \nabla h_\theta^\top \, \dot{\bm{x}}^{j\underline{i}},
\end{equation}
where $\nabla h_\theta \coloneqq \begin{bmatrix} \frac{\partial h_\theta}{\partial x^{j\underline{i}}},\, \frac{\partial h_\theta}{\partial y^{j\underline{i}}},\, \frac{\partial h_\theta}{\partial \psi^{j\underline{i}}} \end{bmatrix}^\top$ is the gradient vector of $h_\theta$. Since neither $\nabla h_\theta$ nor $\dot{\bm{x}}^{j\underline{i}}$ includes the control input, the first derivative does not depend on $\bm{u}$. Differentiating once more and applying the product rule gives
\begin{equation} \label{eq:ddot-h}
    \ddot{h}_{\mathrm{MTV}}(\bm{x}^{j\underline{i}}) = \nabla h_\theta^\top \, \ddot{\bm{x}}^{j\underline{i}} + \dot{\bm{x}}^{j\underline{i}\top} \, H_{h_\theta}\, \dot{\bm{x}}^{j\underline{i}},
\end{equation}
where $H_{h_\theta} \in \mathbb{R}^{3 \times 3}$ is the Hessian matrix of $h_\theta$. Since $\ddot{\bm{x}}^{j\underline{i}}$ is computed from $\ddot{\bm{x}}^i$ and $\ddot{\bm{x}}^j$, which explicitly involve the control inputs $u_v$ and $u_\delta$ (see \eqref{eq:ddot-x-global}), the control input appears in the second derivative. Thus, $h_{\mathrm{MTV}}(\bm{x}^{j\underline{i}})$ has relative degree two.
\end{proof}

Note that for a given neural network and input vector, the gradient and Hessian can be computed directly. We omit the details due to space limitations.

\begin{remark}
Although we use the kinematic bicycle model in this derivation, the proposed notion of relative dynamics can be applied to any dynamical model. The derivation procedure is similar to our derivation in this section.
\end{remark}

\subsection{Optimal Control Problem Formulation}  \label{sec:ocp}
\subsubsection{Training the Safety-Margin Approximator}
We generate a training dataset by computing the MTV-based safety margin using \Cref{alg:mtv}. We create a dense grid of inputs over the feature space as
\[
[x^{j\underline{i}},\, y^{j\underline{i}},\, \psi^{j\underline{i}}] \in [-3\ell_{wb},\, 3\ell_{wb}] \times [-3\ell_{wb},\, 3\ell_{wb}] \times [-\pi,\, \pi].
\]
We limit the relative position of robot $j$ to this space to allow for an estimable upper bound on the approximation error (see \cref{ass:error-upper-bound}). We fix robot $i$'s position and heading at zero and densely enumerate robot $j$'s position and heading over the feature space. Then, we run \cref{alg:mtv} with the positions and headings of the robots to obtain the MTV-based distances, which serve as labels. Our training dataset contains approximately 80k uniformly distributed data points. We train a simple fully connected neural network $h_\theta$ with two hidden layers of 62 nodes each and \texttt{TanH} activation functions. We test on a separate dataset of 20k points, yielding a maximum approximation error of \SI{0.0121}{\meter} (corresponding to \SI{15.2}{\percent} of the robot's width) and a mean error of \SI{2.78}{\percent} of the robot's width. Increasing the size of the testing dataset produces similar results.

\subsubsection{\ac{ocp}}
We use the learned safety-margin approximator $h_\theta$ as the candidate \ac{cbf} and incorporate it into a \ac{qp} to modify an unsafe nominal controller, resulting in a \ac{cbf}-\ac{qp} formulation. The goal is to adjust the nominal control actions minimally to ensure safety. We employ an \ac{rl} policy trained using our \texttt{SigmaRL} \cite{xu2024sigmarl, xu2024xpmarl}, an open-source multi-agent \ac{rl} framework for motion planning of \acp{cav}, as the nominal controller. Henceforth, we refer to the robots as \textit{vehicles}. At each time step, the nominal controller receives a short-term waypoint-based reference path and outputs control actions to follow it. We purposely train the policy to be greedy in following the reference path without considering collisions. We formulate the \ac{ocp} as a \ac{cbf}-\ac{qp} as

\begin{subequations} \label{eq:ocp}
\begin{align}
    J(\bm{u}) &= \min_{\bm{u}} (\bm{u} - \bm{u}_{\text{nom}})^\top Q (\bm{u} - \bm{u}_{\text{nom}}), \\
    \text{s.t.} \quad & \Psi_2(\bm{x}^{j\underline{i}}) \geq 0 \quad (\text{see \eqref{eq:hocbf-constraint}}), \label{constraint:2nd-cbf-condition} \\
    & \bm{u}_{\text{min}} \leq \bm{u} \leq \bm{u}_{\text{max}}, \label{constraint:u-limits}
\end{align}
\end{subequations}
where $\bm{u}_{\text{nom}}$ denotes the nominal control action, and $Q \in \mathbb{R}^{4 \times 4}$ is a weighting matrix. Constraint \eqref{constraint:2nd-cbf-condition} is the \ac{cbf} condition for collision avoidance. We use $\Psi_2$ since the candidate \ac{cbf} has relative degree two, as shown in \Cref{theorem:relative-degree}. For simplicity, we choose the same linear class $\mathcal{K}$ functions, with $\alpha_1(h) = k_\alpha h$ and $\alpha_2(h) = k_\alpha h$, where $k_\alpha > 0$, yielding
\[
\Psi_2(\bm{x}^{j\underline{i}}) \coloneqq \ddot{h}_{\mathrm{MTV}}(\bm{x}^{j\underline{i}}) + 2 k_\alpha \dot{h}_{\mathrm{MTV}}(\bm{x}^{j\underline{i}}) + k_\alpha^2 h_{\mathrm{MTV}}(\bm{x}^{j\underline{i}}).
\]
Constraint \eqref{constraint:u-limits} enforces the control inputs within their physical limits, with $\bm{u}_{\text{min}}$ and $\bm{u}_{\text{max}}$ representing the lower and upper bounds. We solve \eqref{eq:ocp} iteratively in a discrete-time manner using the Python package \texttt{CVXPY} \cite{diamond2016cvxpy}.

\Cref{tab:parameters} lists the simulation parameters, and \Cref{fig_diagram} shows the flow diagram of our \ac{cbf}-\ac{qp} formulation using the learned MTV-based safety margin.

We also compare our MTV-based safety margin with the traditional \ac{c2c}-based safety margin. For the latter, we use
\begin{equation} \label{eq:c2c}
    h_{\mathrm{\ac{c2c}}}(\bm{x}^{j\underline{i}}) \coloneqq \sqrt{\left(x^{j\underline{i}}\right)^2 + \left(y^{j\underline{i}}\right)^2} - 2r_{\text{min}},
\end{equation}
where $r_{\text{min}} \coloneqq {\sqrt{\ell^2 + w^2}}/{2}$ is the minimum radius required to enclose the vehicle (with $\ell$ and $w$ being the vehicle's length and width, respectively). One can easily verify that \eqref{eq:c2c} is a valid \ac{hocbf} with relative degree two.

\begin{table}[t]
    \caption{Simulation parameters.}
    \centering
    \begin{tabular}{ll}
        \toprule
        Parameter & Value \\
        \midrule
        Length $\ell$, width $w$ & \SI{0.16}{\meter}, \SI{0.08}{\meter} \\
        Wheelbase $\ell_{wb}$, rear wheelbase $\ell_r$ & \SI{0.16}{\meter}, \SI{0.08}{\meter} \\
        Max. (min.) acceleration $u_{v,\text{max}}$ & \SI{20}{\meter\per\second\squared} (\SI{-20}{\meter\per\second\squared}) \\
        Max. (min.) steering rate $u_{\delta,\text{max}}$ & \SI{16}{\radian\per\second} (\SI{-16}{\radian\per\second}) \\
        Weighting matrix $Q$ & $\mathcal{I}_{4 \times 4}$ (identity matrix) \\
        \bottomrule
    \end{tabular}
    \label{tab:parameters}
\end{table}

\begin{figure}[t]
    \centering
    \includegraphics[width=0.8\linewidth]{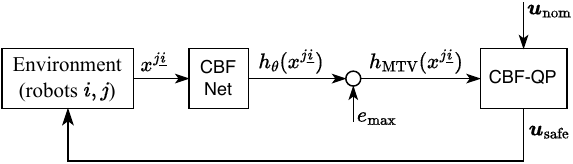}
    \caption{Flow diagram of the \ac{cbf}-\ac{qp} formulation with the MTV-based safety margin.}
    \label{fig_diagram}
\end{figure}

\subsection{First Experiment: Overtaking} \label{sec:scenario-1}
In this scenario, vehicle $i$ (blue) is the ego vehicle attempting to overtake a slower-moving vehicle $j$ (green) ahead, as depicted in \cref{fig:eva_cbf_overtaking_c2c} and \cref{fig:eva_cbf_overtaking_mtv}. Vehicle $i$ uses a trained \ac{rl}-based nominal controller that directs it to move at approximately \SI{1.0}{\meter\per\second} and a \ac{cbf} to ensure safety. Conversely, vehicle $j$ only uses a trained \ac{rl}-based nominal controller, which maintains a constant speed of \SI{0.5}{\meter\per\second}. To encourage overtaking, vehicle $i$’s reference path is projected to the centerline of the adjacent lane. In addition, to test the robustness of the \ac{cbf}, vehicle $j$ conditionally switches lanes to obstruct the overtaking maneuver until it has obstructed three times, after which it remains in its lane. In this scenario, because vehicle $j$ follows its nominal actions, the \ac{ocp} \eqref{eq:ocp} has only two decision variables corresponding to vehicle $i$'s control actions. At each time step, vehicle $j$ communicates its nominal actions to vehicle $i$, which uses them to formulate Constraint \eqref{constraint:2nd-cbf-condition} for collision avoidance. We ignore communication delays.

\textbf{Results}: \Cref{fig:eva_cbf_overtaking_c2c} shows the performance with the \ac{c2c}-based safety margin. Vehicle $i$ starts at $x=\SI{-1.2}{\meter}$ and vehicle $j$ at $x=\SI{-0.4}{\meter}$. During the three obstructive maneuvers, the \ac{c2c}-based margin prevents a collision, but after $t=\SI{4.8}{\second}$, vehicle $j$ stops obstructing, and vehicle $i$ is unable to complete the overtaking maneuver because of the conservatism of the \ac{c2c}-based margin. The blue line in \cref{fig:eva_cbf_overtaking_c2c} shows the $h$ value over time, with near-zero values indicating that the system state remains close to the boundary of the safe set. In contrast, as shown in \cref{fig:eva_cbf_overtaking_mtv}, using our MTV-based safety margin, vehicle $i$ successfully overtakes vehicle $j$ at $t=\SI{6.4}{\second}$.

\begin{figure}[t]
	\centering
	\includegraphics[width=0.48\textwidth]{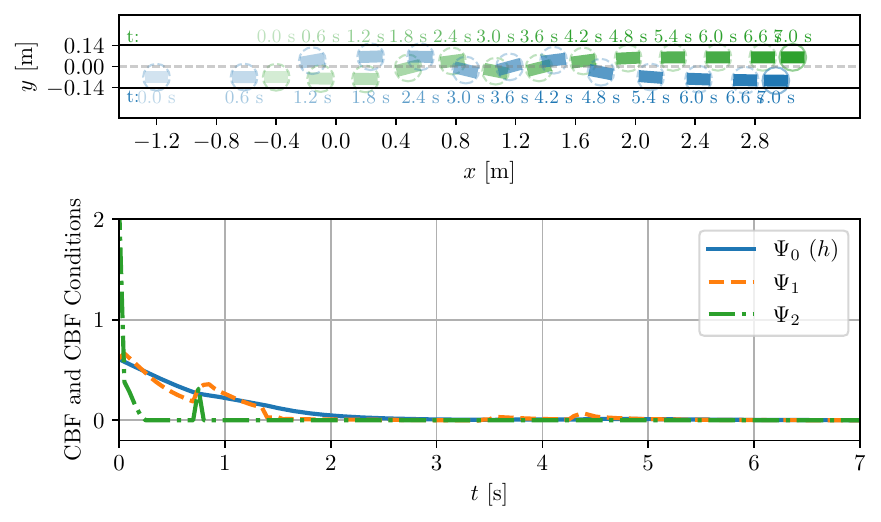}
	\caption{Overtaking scenario with \ac{c2c}-based safety margin. The lowest safety margin occurs for $t \ge \SI{5.4}{\second}$.}
	\label{fig:eva_cbf_overtaking_c2c}
\end{figure}
\begin{figure}[t]
	\centering
	\includegraphics[width=0.48\textwidth]{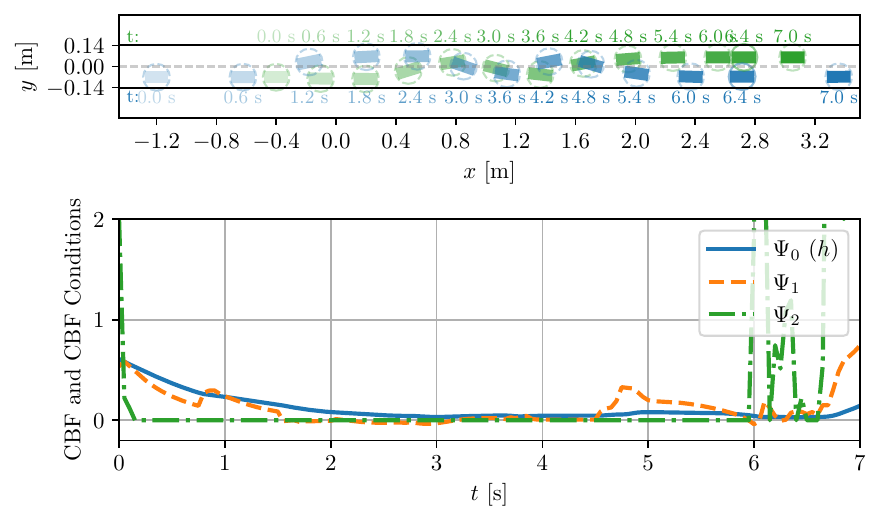}
	\caption{Overtaking scenario with MTV-based safety margin. The lowest safety margin occurs at $t=\SI{6.4}{\second}$.}
	\label{fig:eva_cbf_overtaking_mtv}
\end{figure}

\subsection{Second Experiment: Bypassing} \label{sec:scenario-2}
In this scenario, two vehicles approach each other from opposite directions on a narrow road, as depicted in \cref{fig:eva_cbf_bypassing_c2c} and \cref{fig:eva_cbf_bypassing_mtv}. Both vehicles use the same \ac{rl}-based nominal controller that controls them to move at a speed of \SI{1.0}{\meter\per\second}. We apply the \ac{cbf} to both vehicles to adjust the nominal control actions to ensure collision-freeness. Therefore, the centralized \ac{ocp} \eqref{eq:ocp} has four decision variables, two for each vehicle's actions.

Let $Y = y_0$ denote a horizontal line at $y_0 \in \mathbb{R}$. Initially, we project the nominal reference points for both vehicles onto $Y^i = 0$ and $Y^j = 0$, respectively. As the vehicles approach, the projections are shifted to $Y^i = y_{\text{nom}} > 0$ and $Y^j = -y_{\text{nom}} < 0$ to encourage bypassing. The parameters $y_{\text{nom}}$ and $k_\alpha$ are tuned jointly for optimal bypassing with minimal lateral displacement. Final values are $y_{\text{nom}} = \SI{0.116}{\meter}$ (\SI{145.3}{\percent} of the vehicle width) and $k_\alpha = 3$ for the \ac{c2c}-based margin, and $y_{\text{nom}} = \SI{0.072}{\meter}$ (\SI{90.0}{\percent} of the vehicle width) and $k_\alpha = 6$ for the MTV-based margin. 

\textbf{Results}: As shown in \cref{fig:eva_cbf_bypassing_c2c}, vehicle $i$ starts at $x=\SI{-1.2}{\meter}$ moving rightward and vehicle $j$ at $x=\SI{1.2}{\meter}$ moving leftward. At $t=\SI{2.5}{\second}$, the safety margin reaches its minimum and the vehicles bypass each other, albeit with significant lateral evasion (vehicle $i$: \SI{119.6}{\percent} of the vehicle width, vehicle $j$: \SI{124.7}{\percent}; average: \SI{122.2}{\percent}). The bypassing process, i.e., both vehicles reach their opposite sides, takes about \SI{3.6}{\second}. In comparison, with our MTV-based safety margin, the minimum safety margin occurs at $t=\SI{1.7}{\second}$, and the vehicles bypass successfully with an average lateral evasion of \SI{83.13}{\percent} of the vehicle width (a reduction of \SI{33.5}{\percent} compared to the \ac{c2c}-based safety margin), and the bypass completes at $t=\SI{3.0}{\second}$, which is \SI{16.7}{\percent} faster.

\begin{figure}[t]
	\centering
	\includegraphics[width=0.48\textwidth]{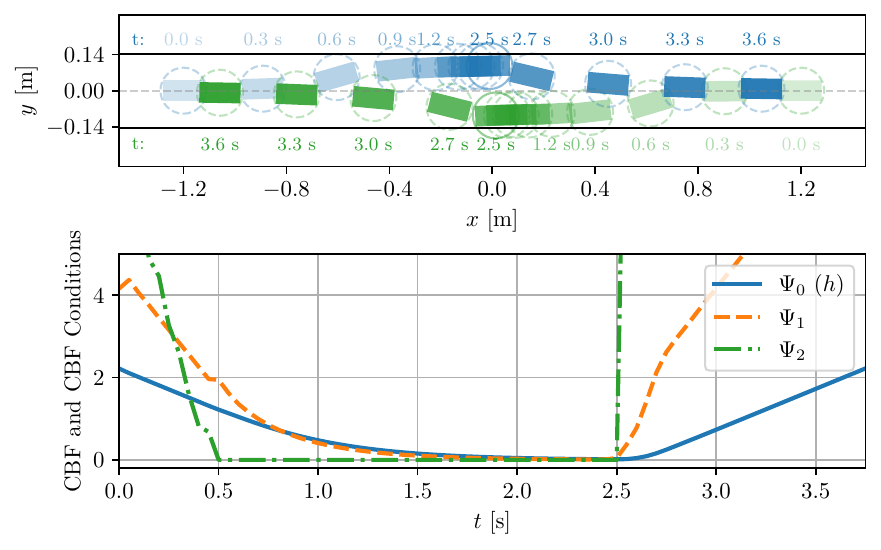}
	\caption{Bypassing scenario with \ac{c2c}-based safety margin. The minimum safety margin occurs at $t=\SI{2.5}{\second}$.}
	\label{fig:eva_cbf_bypassing_c2c}
\end{figure}
\begin{figure}[t]
	\centering
	\includegraphics[width=0.48\textwidth]{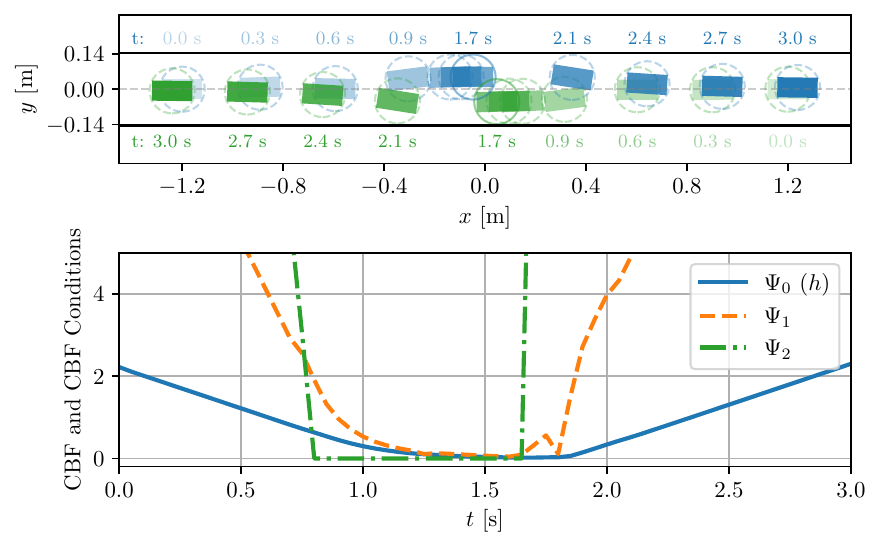}
	\caption{Bypassing scenario with MTV-based safety margin. The minimum safety margin occurs at $t=\SI{1.7}{\second}$.}
	\label{fig:eva_cbf_bypassing_mtv}
\end{figure}

\subsection{Discussions and Limitations} \label{sec:limitation}
The overtaking and bypassing scenarios demonstrate that our MTV-based safety margin yields less conservative collision avoidance compared to the traditional \ac{c2c}-based approach. In the overtaking scenario, the \ac{c2c}-based safety margin prevents collisions but restricts overtaking, while our MTV-based safety margin allows a smooth, safe, and efficient overtaking maneuver. In the bypassing scenario, both methods maintain safety, yet our MTV-based margin reduces the lateral space required for bypassing by \SI{33.5}{\percent} and decreases bypassing time by \SI{16.7}{\percent}. Importantly, these improvements do not significantly affect the computation time. In the overtaking scenario, the \ac{ocp} \eqref{eq:ocp} is solved in an average of \SI{7.4}{\milli\second} per step with the \ac{c2c}-based margin and \SI{7.6}{\milli\second} with the MTV-based margin, while in the bypassing scenario, the average times are \SI{11.3}{\milli\second} and \SI{11.6}{\milli\second}, respectively.

Applying our \ac{hocbf} requires computing the gradient and Hessian matrix of the safety margin approximator (see \eqref{eq:dot-h} and \eqref{eq:ddot-h}). Since the original function for the MTV-based margin is non-differentiable, we can not compute their actual values and do not know how accurate they are. Nevertheless, given the marginal approximation error in the safety margin (\SI{1.4}{\percent} of the robot's width on average) and its continuous nature, we expect the gradient and Hessian approximation errors to be similarly minor and negligible.

\section{Conclusions}\label{sec:conclusions}
We proposed \iac{mtv}-based safety margin for collision avoidance of car-like robots that accounts for their actual geometries and headings, offering a more precise estimation of safe regions compared to the traditional \ac{c2c}-based safety margin that approximates robots as circles. Because this safety margin is inherently non-differentiable, we approximated it with a neural network and introduced a notion of relative dynamics to facilitate tractable learning. We provided the theoretical foundation for applying this notion to a nonlinear kinematic bicycle model and compared our MTV-based safety margin with the traditional \ac{c2c}-based safety margin through numerical experiments in overtaking and bypassing scenarios. In the overtaking scenario, while the traditional approach failed to complete the maneuver, our method succeeded. In the bypassing scenario, our approach reduced the required lateral space for bypassing by \SI{33.5}{\percent} and shortened the bypass time by \SI{16.7}{\percent}. Our approach achieved all these enhancements without significantly increasing computation time. Future work will extend our approach to multi-robot systems.
    
\bibliographystyle{IEEEtran}
\bibliography{references_bassam, references_others}

\end{document}